\newtheorem{theorem}{Theorem} 
\newtheorem{problem}{Problem} 
\theoremstyle{remark}
\newtheorem{remark}[theorem]{Remark}
\newcommand{\hide}[1]{}
\newcommand*{\Scale}[2][4]{\scalebox{#1}{$#2$}}
\title{Unified Group Fairness on Federated Learning}
\author{

    Fengda Zhang\textsuperscript{\rm 1}, Kun Kuang\textsuperscript{\rm 1}, Yuxuan Liu\textsuperscript{\rm 2}, Long Chen\textsuperscript{\rm 1}, Chao Wu\textsuperscript{\rm 2}, Fei Wu\textsuperscript{\rm 1},\\ Jiaxun Lu\textsuperscript{\rm 3}, Yunfeng Shao\textsuperscript{\rm 3}, Jun Xiao\textsuperscript{\rm 1}
    \\
}
\begin{document}

\maketitle

\begin{abstract}
Federated learning (FL) has emerged as an important machine learning paradigm where a global model is trained based on the private data from distributed clients. However, most of existing FL algorithms cannot guarantee the performance fairness towards different groups because of data distribution shift over groups. In this paper, we formulate the problem of unified group fairness on FL, where the groups can be formed by clients (including existing clients and newly added clients) and sensitive attribute(s). To solve this problem, we first propose a general fair federated framework. Then we construct a unified group fairness risk from the view of federated uncertainty set with theoretical analyses to guarantee unified group fairness on FL. We also develop an efficient federated optimization algorithm named Federated Mirror Descent Ascent with Momentum Acceleration (FMDA-M) with convergence guarantee. We validate the advantages of the FMDA-M algorithm with various kinds of distribution shift settings in experiments, and the results show that FMDA-M algorithm outperforms the existing fair FL algorithms on unified group fairness.

\end{abstract}

\section{Introduction}
Federated learning (FL) has emerged as an important machine learning paradigm where distributed clients (e.g., a large number of mobile devices or several organizations) collaboratively train a shared global model while keeping private data on clients~\cite{mcmahan2017communication}. 
However, FL may suffer from fairness problem by disproportionately advantaging or disadvantaging the model performance on different subpopulations, which becomes an increasing concern, especially in some high-stakes scenarios such as loan approvals, healthcare, etc~\cite{kairouz2019advances}. 
How to develop a fair FL framework is of paramount importance for both academic research and real applications, and has become an important research theme in recent years~\cite{mohri2019agnostic, li2019fair, wang2021federated}.

\begin{figure}[t]
\centering                         
\includegraphics[width=3.1in]{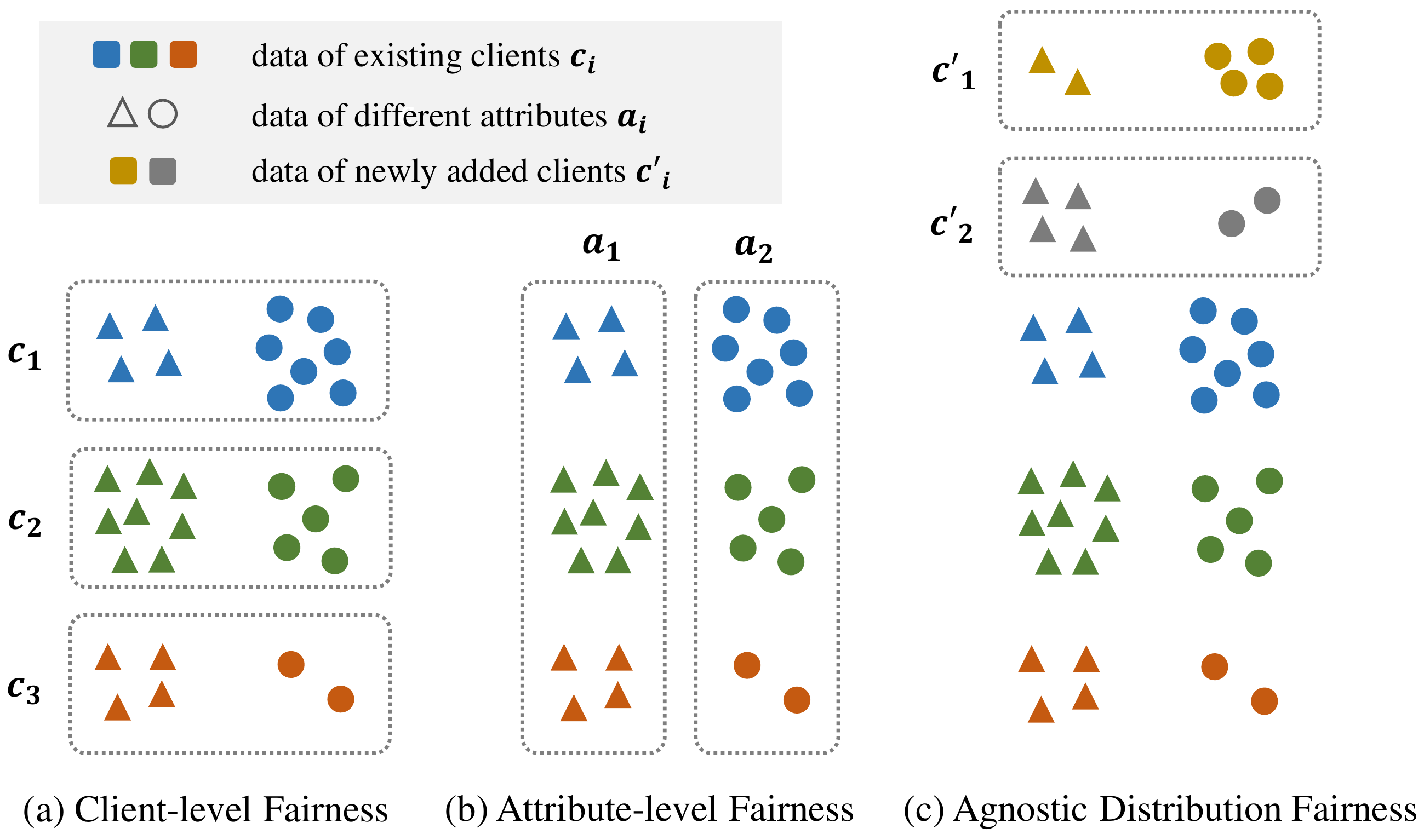}
\vspace{-0.5em}
\caption{Illustration of three kinds of group fairness in FL scenario. (a) client-level fairness: $P(Y=\hat{Y}|c_i)=P(Y=\hat{Y}|c_j)$ for $\forall$ $i$, $j$; (b) attribute-level fairness: $P(Y=\hat{Y}|a_i)=P(Y=\hat{Y}|a_j)$ for $\forall$ $i$, $j$; (c) agnostic distribution fairness: $P(Y=\hat{Y}|c'_i)=P(Y=\hat{Y}|c'_j)$ for $\forall$ $i$, $j$. The groups are formed by existing clients index, sensitive attribute(s), and index of newly added clients with unknown distribution, respectively.}
\vspace{-1.7em}
\label{fig-intro}                       
\end{figure}

A straightforward idea is to apply the methods designed for centralized fairness problem to FL setting. Unfortunately, most of them assume a centralized available training dataset, which infringes the data privacy in FL~\cite{barocas2016big, woodworth2017learning, mehrabi2021survey}. Recently, some works have been proposed to encourage the federated model to have similar performance over different clients (\textbf{\emph{client-level fairness}}, shown in Fig.~\ref{fig-intro}(a)) by learning personalized models for each client~\cite{hanzely2020federated, deng2020adaptive}, reweighting~\cite{mohri2019agnostic, li2019fair} or mitigating conflicts of gradients among clients~\cite{wang2021federated}. However, a federated model trained by them may still suffer from ethical issues in real applications due to neglect of fairness at other levels. For example, consider a scenario that several banks (clients) participate in FL to collaboratively train a loan approval model. Different banks will have different customer demographic compositions (formed by sensitive attribute(s), such as race and/or gender). Although a federated model trained by client-level method treats different banks fairly, the accuracy of it may vary greatly among some sensitive attribute(s) (\textbf{\emph{attribute-level fairness}}, Fig.~\ref{fig-intro}(b)). Moreover, the fairness of federated model on those banks that newly participate in FL (\textbf{\emph{agnostic distribution fairness}}, Fig.~\ref{fig-intro}(c)) is also not guaranteed. A model violating any of the above fairness may lead to serious ethical problems, which naturally leads us to ask: \textit{Can we propose a unified FL framework to train a federated model achieving client-level, attribute-level and agnostic distribution fairness \textbf{simultaneously}?}

In this paper, we focus on the problem of unified group fairness on FL, where the group can be formed by clients (including existing clients and newly added clients) and sensitive attribute(s). To address this problem, we propose a general fair federated framework and a unified group fairness risk from the view of federated uncertainty set~\cite{delage2010distributionally, wiesemann2014distributionally, duchi2016variance}. Theoretically, we prove that the proposed unified group fairness risk provides an upper bound for both client-level and attribute-level fairness risks, which helps to deal with complex distribution shifts and thus guarantee fairness at multiple levels simultaneously.
We also develop an efficient federated optimization algorithm named Federated Mirror Descent Ascent with Momentum Acceleration (FMDA-M) to optimize the proposed risk for unified group fairness on FL, and provide corresponding convergence guarantee.
Empirically, the advantages of our proposed FMDA-M method, in terms of unified group fairness and robustness which refers to the worst performance in groups, are demonstrated under different kinds of distribution shift on three real-world datasets.

Our main contributions can be summarized as follows: (i) We investigate a new problem of unified group fairness on FL for achieving client-level, attribute-level and agnostic distribution fairness simultaneously; (ii) From a novel perspective of uncertain sets in FL, theoretically, we propose a fair federated framework and a corresponding unified group fairness risk; (iii) To optimize the proposed risk for unified group fairness, we develop an efficient federated optimization algorithm named FMDA-M based on mirror descent ascent technique and momentum acceleration with convergence guarantee; (iv) Extensive experiments show that our proposed FMDA-M outperforms the existing fair federated learning methods on unified group fairness.

\section{Related Work}
\textbf{Fairness in Machine Learning.}
Fairness in ML has attracted much attention, which can be divided into two branches: \textit{individual fairness} and \textit{group fairness}~\cite{zemel2013learning, awasthi2020beyond, binns2020apparent}.
Individual fairness encourages the models to treat similar individuals similarly~\cite{biega2018equity, sharifi2019average, mukherjee2020two}, while group fairness requires that the model treats different groups equally~\cite{dwork2012fairness, dwork2018decoupled}.
Here we mainly focus the latter one, which is typically defined via some protected attribute(s) and a metric such as statistical parity, equalized odds~\cite{hardt2016equality}, and predictive parity~\cite{chouldechova2017fair}.
In this paper, we consider how to learn a fair global model that achieves a uniform performance across groups in FL. 

\textbf{Federated Learning and Fairness.}
FL has received much attention as an important distributed learning paradigm~\cite{mcmahan2017communication}. The scope of federated learning studies is broad, which includes statistical challenges~\cite{zhao2018federated, yu2020federated}, privacy protection~\cite{truex2019hybrid}, systematic challenges~\cite{konevcny2016federated1, konevcny2016federated2, suresh2017distributed}, fairness~\cite{mohri2019agnostic, li2019fair}, etc~\cite{kairouz2019advances, yang2019federated}. The existing studies of fairness on FL can be divided into three categories: performance fairness across clients~\cite{li2019fair, mohri2019agnostic, deng2021distributionally, li2021ditto}, model fairness defined on sensitive attributes~\cite{du2021fairness} and incentive mechanism~\cite{kang2019incentive, zhan2020learning}.
In this paper, we focus on the performance fairness across clients (including existing clients and newly added clients) and sensitive attribute(s).


\textbf{Distributionally Robust Optimization.}
There has been a surge of interest in distributionally robust optimization (DRO), which can deal with distribution shifts by considering a potential distribution set around the original distribution and optimizing the worst case~\cite{delage2010distributionally, wiesemann2014distributionally}. There are mainly two definitions of distance between distributions in DRO: \textit{$f$-divergences} and \textit{Wasserstein distance}. 
The former method is effective when the support of
the distribution is fixed~\cite{duchi2016variance, namkoong2016stochastic, duchi2021learning}, while Wasserstein distance-based DRO considers the potential distributions with different supports and allows robustness to unseen data, but is difficult to optimize~\cite{sinha2017certifying, esfahani2018data, liu2021stable}.
Recently, some studies about group DRO have emerged~\cite{hu2018does, sagawa2019distributionally, oren2019distributionally}, which considers the distribution shifts over groups. In this paper, we extend DRO to FL setting for unified group fairness.


\section{Problem Formulation}

\subsection{Preliminary on Federated Learning}
Suppose that there are $N$ clients in FL and each client $i\in\{1,2,\ldots,N\}$ is associated with a local dataset $D^c_i=\{(x^c_{i, 1}, y^c_{i, 1}), \ldots,(x^c_{i, n^c_{i}}, y^c_{i, n^c_{i}})\}$, where $n^c_i$ is the sample size of client $i$. Let $D=\{D^{c}_1,\cdots,D^{c}_N\}$ be the full dataset with sample size $n=\sum_{i=1}^N n^c_i$. Let $P^c_i$ and $P$ denote the data-generating distribution of each client data $D^c_i$ and whole data $D$ over $\mathcal{X} \times \mathcal{Y}$, respectively.
In general, the basic goal of FL is to learn a global model with parameters $\theta\in\Theta$ that performs well on distribution $P$ (in terms of average performance) without accessing the private data of clients.

\subsection{Unified Group Fairness on Federated Learning}
\textit{Group fairness} is usually defined via a \textbf{sensitive variable} $S$ and a \textbf{metric}, where the dataset is divided into different subgroups by $S$ and the requirement is an equalized metric for all protected subgroups. To extend group fairness to federated learning, we specify the sensitive variable $S$ and the metric with the concern of federated learning setting.

\subsubsection{Sensitive Variable}
Suppose that, for a given sensitive variable $S$, the full dataset $D$ is divided into $M$ groups by $S$: $D=\{D^g_1, D^g_2, \ldots, D^g_M\}$.
We let $P^g_i$ denote the data-generating distribution of $D^g_i$, and the distribution might be different across groups. 
Note that the sensitive variable $S$ can be defined not only as client index, but also as some sensitive attribute(s) including explicit attribute (e.g., gender), implicit attribute (e.g., domain) and target label. Moreover, the groups also can be the newly added clients.

\subsubsection{Fairness Metric}
In this paper, following the \textit{difference principle} on distributive justice and stability~\cite{rawls2001justice}, we view the performance of federated model as the resource which is supposed to be allocated into groups in a fair way.

Before formulating the problem formally, we first define $Disparity$ of a federated learning model across groups $\{D^g_i|i=1,2,\ldots,M\}$ as:
\begin{equation} \label{bias-acc}
\setlength\abovedisplayskip{4pt}
\setlength\belowdisplayskip{4pt}
\Scale[0.95]{Disparity = \sqrt{\frac{1}{M}\sum_{i=1}^{M}(Acc(D^g_i)-Avg\_Acc)^2}},
\end{equation}
where $Acc(D^g_i)$ is the predictive accuracy on group $D^g_i$, and $Avg\_Acc = \frac{1}{M}\sum_{i=1}^{M} Acc(D^g_i)$. 
In this work, we define fairness by $Disparity$, and the smaller $Disparity$, the fairer of a federated learning model.

\subsubsection{Unified Group Fairness}
Note that different sensitive variables $S$ correspond to different problems.
In this paper, we focus on the following three common cases: 1) \textbf{\emph{client-level fairness}}: $S$ specified as the index of existing clients; 2) \textbf{\emph{attribute-level fairness}}: $S$ specified as sensitive attribute; 3) \textbf{\emph{agnostic distribution fairness}}: $S$ specified as the index of potential clients with agnostic distribution. A model violating any of the above fairness definitions may lead to serious ethical problems in reality, so we propose a novel problem as below:
\begin{problem}[\textbf{Unified Group Fairness on FL}] \label{problem}
Let the sensitive variable $S$ be specified as the index of existing clients $c$, the protected attribute(s) $a$, and the index of newly added clients $ad$, respectively, then the dataset $D$ can be split into groups $\{D^c_k|k=1,2,\ldots,M^c\}$, $\{D^a_k|k=1,2,\ldots,M^a\}$, and $\{D^{ad}_k|k=1,2,\ldots,M^{ad}\}$, respectively. The task is to learn a federated model with small $Disparity$ across $\{D^c_k\}$, $\{D^a_k\}$, and $\{D^{ad}_k\}$ \textbf{\emph{simultaneously}}.
\end{problem}

\begin{remark}
In fact, a model with good fairness yet low performance is meaningless. $Avg\_Acc$ measures the overall performance of a model, but in many real applications (e.g., medical diagnosis, autonomous vehicles, credit evaluations and criminal justice), we are more concerned about FL model robustness. $Robustness$ in FL refers to the performance of the worst group with the following definition:
\begin{equation} \label{robustness-acc}
\setlength\abovedisplayskip{4pt}
\setlength\belowdisplayskip{4pt}
\Scale[0.95]{Robustness = \mathop{min}\limits_{i} Acc(D^g_i)}.
\end{equation}
In this paper, we focus on improving both the unified group fairness (in terms of $Disparity$) and $Robustness$ of the global model in FL.
\end{remark}

\section{Unified Fair Federated Framework}

In this section, we first introduce a general group fairness federated framework. Then, from the view of federated uncertainty set, we propose a group-based risk for unified group fairness on FL with theoretical analysis.

\subsection{DRO-based Group Fairness Federated Framework}
In order to encourage the federated model to have uniform performance over groups, a natural idea is to minimize a combination of the average risk and the variance of risk across different groups. Unfortunately, it is usually computationally intractable due to the variance term, especially in FL~\cite{duchi2016variance, mcmahan2017communication}.


To solve this problem, inspired by the techniques of distributionally robust optimization (DRO), we introduce a group fairness federated framework. Specifically, we first define an uncertainty set $\mathcal{Q}^g$ that contains any possible distribution formed by a mixture of the distributions of those groups. Then we introduce our proposed risk $\mathcal{R}_{\mathrm{group}}$ based on $\mathcal{Q}^g$:
\begin{equation} \label{group-dro-1} 
\setlength\abovedisplayskip{4pt}
\setlength\belowdisplayskip{4pt}
\Scale[0.95]{
\begin{aligned}
\mathcal{R}_{\mathrm{group}}(\theta)&:= \sup _{Q \in \mathcal{Q}^g}\left\{\mathbb{E}_{(x, y) \sim Q}[\ell(\theta, (x, y))] \right\},
\\
\mathcal{Q}^g &:= \{\sum\nolimits_{i=1}^M \lambda^g_{i}P^g_{i}: \bm{\lambda}^g \in \Delta_{M-1}\},
\end{aligned}}
\end{equation} 
where $\ell: \Theta \times (\mathcal{X} \times \mathcal{Y}) \rightarrow \mathbb{R}_{+}$ is a loss function, $M$ is the number of groups, $P^g_i$ is the distribution from which the samples of group $D^g_i$ are drawn, $\bm{\lambda}^g=(\lambda^g_{1}, \lambda^g_{2}, \ldots, \lambda^g_{M})$ is the vector of group weights, $\Delta_{M-1}$ is the standard $(M-1)$-simplex and uncertainty set $\mathcal{Q}^g$ contains any possible combinations of distributions of groups $\{P^g_i|i=1,2,\ldots,M\}$. Intuitive, by considering the potential distribution shifts over groups, the proposed risk (\ref{group-dro-1}) encourages the model to focus on those groups with high risk, so it helps to achieve group fairness. The following theorem supports our analysis.

\begin{theorem} Let $\mathcal{R}^g_{i}(\theta):=\mathbb{E}_{(x, y) \sim P^g_i}[\ell(\theta; (x, y))]$ be a risk defined on $i$-th group, $\bm{\lambda}^g \in \Delta_{m-1}$ be the group weights, $M$ be the total number of groups, $\Bar{\mathcal{R}}^g(\theta)$ be the average of group risks, $d_{i} := (\mathcal{R}^g_{i}(\theta)-\Bar{\mathcal{R}}^g(\theta))^2$ and $\mathrm{Var}(\mathcal{R}^g_{i}(\theta)):=\frac{1}{M}\sum_{i=1}^Md_{i}$ be the variance of group risks. If $\Vert M\bm{\lambda}^g-\bm{1}\Vert_2^2 \leq \mathop{min}\limits_{i}\{\frac{\sum_{i=1}^{M}d_{i}}{d_{i}}\}$, then there exists a constant $C>0$ such that
\begin{equation} \label{group-var}
\setlength\abovedisplayskip{4pt}
\setlength\belowdisplayskip{4pt}
\Scale[0.95]{
\mathcal{R}_{\mathrm{group}}(\theta)= \Bar{\mathcal{R}}^g(\theta) + C\sqrt{\mathrm{Var}_{i\in [M]}(\mathcal{R}^g_{i}(\theta))}.}
\end{equation}
\end{theorem}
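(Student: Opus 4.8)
The plan is to exploit the linearity of the inner expectation so that the distributional supremum collapses to the supremum of a \emph{linear} functional over the simplex, and then to read off the variance term through a single application of the Cauchy--Schwarz inequality whose equality case is governed by the stated norm hypothesis.

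First I would substitute the mixture $Q=\sum_{i=1}^M \lambda^g_i P^g_i$ into the definition of $\mathcal{R}_{\mathrm{group}}$ and use linearity of expectation to write $\mathcal{R}_{\mathrm{group}}(\theta)=\sup_{\bm{\lambda}^g\in\Delta_{M-1}}\sum_{i=1}^M \lambda^g_i\,\mathcal{R}^g_i(\theta)$. Since any $\bm{\lambda}^g$ in the simplex satisfies $\sum_i \lambda^g_i=1$, I would recentre the objective around the mean risk, writing $\sum_i \lambda^g_i\mathcal{R}^g_i(\theta)=\bar{\mathcal{R}}^g(\theta)+\sum_i(\lambda^g_i-\tfrac1M)(\mathcal{R}^g_i(\theta)-\bar{\mathcal{R}}^g(\theta))$, where the extra $\tfrac1M$ may be inserted freely because the deviations $\mathcal{R}^g_i-\bar{\mathcal{R}}^g$ sum to zero. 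This isolates a residual inner product between the centred weight vector $\bm{\lambda}^g-\tfrac1M\bm{1}$ and the risk-deviation vector.

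Next I would bound that inner product by Cauchy--Schwarz, using $\Vert\bm{\lambda}^g-\tfrac1M\bm{1}\Vert_2=\tfrac1M\Vert M\bm{\lambda}^g-\bm{1}\Vert_2$ and the fact that the deviation vector has squared norm $\sum_i d_i=M\,\mathrm{Var}(\mathcal{R}^g_i(\theta))$. This gives $\sum_i\lambda^g_i\mathcal{R}^g_i(\theta)\le\bar{\mathcal{R}}^g(\theta)+\frac{\Vert M\bm{\lambda}^g-\bm{1}\Vert_2}{\sqrt{M}}\sqrt{\mathrm{Var}(\mathcal{R}^g_i(\theta))}$, which is already the claimed form with $C=\Vert M\bm{\lambda}^g-\bm{1}\Vert_2/\sqrt{M}>0$ (strict whenever $\bm{\lambda}^g\neq\tfrac1M\bm{1}$). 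To upgrade the inequality to the asserted equality, I would invoke the Cauchy--Schwarz equality case, choosing weights aligned with the deviations, $\lambda^g_i=\tfrac1M+t(\mathcal{R}^g_i(\theta)-\bar{\mathcal{R}}^g(\theta))$ for a scalar $t\ge0$, which automatically sums to one; the only remaining thing to check is that these weights stay in the simplex.

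The hard part, and the precise role of the hypothesis, is exactly this feasibility check. Nonnegativity $\lambda^g_i\ge0$ binds only at the most negative deviation, requiring $t\le 1/(M\max_i(\bar{\mathcal{R}}^g(\theta)-\mathcal{R}^g_i(\theta)))$; since $(\bar{\mathcal{R}}^g-\mathcal{R}^g_i)^2\le\max_i d_i$, it suffices that $t\le 1/(M\sqrt{\max_i d_i})$. Computing $\Vert M\bm{\lambda}^g-\bm{1}\Vert_2^2=t^2M^2\sum_i d_i=t^2M^3\,\mathrm{Var}$ and matching it against the hypothesis $\Vert M\bm{\lambda}^g-\bm{1}\Vert_2^2\le\min_i\{(\sum_j d_j)/d_i\}=(\sum_j d_j)/\max_i d_i$ shows that the hypothesis is equivalent to $t\le 1/(M\sqrt{\max_i d_i})$, i.e.\ it is calibrated exactly to guarantee feasibility of the aligned weights. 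Hence the aligned $\bm{\lambda}^g$ is admissible, Cauchy--Schwarz is tight there, and the supremum attains $\bar{\mathcal{R}}^g(\theta)+C\sqrt{\mathrm{Var}}$ with $C>0$. I would close by dispatching the degenerate case $\mathrm{Var}=0$ (all group risks equal, so both sides reduce to $\bar{\mathcal{R}}^g$), and I would flag that the cleanest reading requires the supremum to range over weights satisfying the norm bound rather than the full simplex, since over the entire simplex the supremum sits at a vertex whose norm violates the hypothesis.
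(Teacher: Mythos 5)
Your proof is correct and follows essentially the same route as the paper's: collapse the supremum over mixtures to a linear functional on the simplex, recenter around the mean risk via $\sum_i(\lambda^g_i-\tfrac1M)(\mathcal{R}^g_i-\bar{\mathcal{R}}^g)$, apply Cauchy--Schwarz, and use the norm hypothesis exactly to certify simplex-feasibility of the equality-attaining aligned weights. You are in fact slightly more careful than the paper, which drops a $\sqrt{M}$ factor (writing $\sqrt{\sum_i u_i^2}\,\sqrt{\mathrm{Var}}$ where $\sqrt{\sum_i u_i^2}\,\sqrt{M\,\mathrm{Var}}$ is meant, so your explicit $C=\Vert M\bm{\lambda}^g-\bm{1}\Vert_2/\sqrt{M}$ is the corrected constant), omits the degenerate $\mathrm{Var}=0$ case, and never addresses your valid observation that the supremum must be read as restricted to weights satisfying the norm bound, since over the full simplex it sits at a vertex and equals $\max_i\mathcal{R}^g_i(\theta)$.
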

See Appendix for the proof. The theorem above shows that our proposed risk $\mathcal{R}_{\mathrm{group}}$ can be viewed as a combination of the average risk that helps improve the average performance and the variance term that encourages the model to have a uniform performance across different groups. Therefore, minimizing the risk $\mathcal{R}_{\mathrm{group}}$ helps achieve group fairness.

\subsection{Construction of Uncertainty Set}
Note that the above framework is general because the uncertainty set $\mathcal{Q}^g$ can be arbitrarily selected as needed. If we select a client-level uncertainty set (i.e., $S$ specified as client index), the framework will degrade to AFL \cite{mohri2019agnostic}, a client-level fair FL method, with risk as below:
\begin{equation} \label{client-dro}
\setlength\abovedisplayskip{4pt}
\setlength\belowdisplayskip{4pt}
\Scale[0.95]{
\begin{aligned}
\mathcal{R}_{\mathrm{client}}(\theta)&:=\sup _{Q \in \mathcal{Q}^c}\left\{\mathbb{E}_{(x, y) \sim Q}[\ell(\theta, (x, y))] \right\}, \\
\mathcal{Q}^c &:= {\{\sum\nolimits_{i=1}^N \lambda^c_{i}P^c_{i}: \bm{\lambda}^c \in \Delta_{N-1}\}},
\end{aligned}}
\end{equation}
where $\bm{\lambda}^c=(\lambda^c_{1}, \lambda^c_{2}, \ldots, \lambda_{N})$ represents client weights, $P^c_i$ is the data-generating distribution of client $i$, and uncertainty set $\mathcal{Q}^c$ contains potential distribution shift over clients.

Similarly, if we select an attribute-level uncertainty set $\mathcal{Q}^a$, the framework will degrade to an attribute-level fair federated learning method with risk $\mathcal{R}_{\mathrm{attribute}}(\theta)$:
\begin{equation} \label{attribute-dro}
\setlength\abovedisplayskip{4pt}
\setlength\belowdisplayskip{4pt}
\Scale[0.95]{
\begin{aligned}
\mathcal{R}_{\mathrm{attribute}}(\theta)&:=\sup _{Q \in \mathcal{Q}^a}\left\{\mathbb{E}_{(x, y) \sim Q}[\ell(\theta, (x, y))] \right\}, \\
\mathcal{Q}^a &:= {\{\sum\nolimits_{i=1}^A \lambda^a_{i}P^a_{i}: \bm{\lambda}^a \in \Delta_{A-1}\}},
\end{aligned}}
\end{equation}
where $P^a_i$ denotes the data-generating distribution of samples from group $D^a_i$ (formed by protected attribute) and the sensitive attribute can take on $A$ values.

The essential issue of the client-level and attribute-level method lies in the construction of the uncertainty set, which is too small to deal with potential distribution shifts. Therefore, we should construct a wider uncertainty set that not only contains the distribution shifts considered by the client-level and attribute-level uncertainty set, but also contains the worse cases to adapt to newly added clients better.

As one possible way, we specify $S$ as the combination of the client index and the given sensitive attribute(s). Specifically, we divide the local dataset $D^c_i$ of client $i$ into subgroups $D^c_i=\{D^u_{i, 1}, D^u_{i, 2}, \ldots, D^u_{i, M_i}\}$ and consider the potential distribution shifts over them, where $M_i$ is the number of subgroups on client $i$. Suppose that the samples of $D^u_{i, k}$ are drawn from the distribution $P^u_{i, k}$. Then we define:
\begin{equation} \label{group-dro-2}
\setlength\abovedisplayskip{4pt}
\setlength\belowdisplayskip{4pt}
\Scale[0.95]{
\begin{aligned}
\mathcal{R}_{\mathrm{unified}}(\theta)&:= {\sup _{Q \in \mathcal{Q}^u}\left\{\mathbb{E}_{(x, y) \sim Q}[\ell(\theta, (x, y))] \right\}},
\\
\mathcal{Q}^u :=& {\{\sum\nolimits_{i=1}^N \sum\nolimits_{k=1}^{M_i} \lambda^u_{i, k}P^u_{i, k}: \bm{\lambda}^u \in \Delta_{M-1}\}},
\end{aligned}}
\end{equation} 
where $M=\sum_{i=1}^N M_i$ is the total number of subgroups and $\lambda_{i,k}^u$ is the weight of $k$-th subgroup on client $i$.

\begin{theorem} \label{theorem-subset}
Let $\hat{P}^c_i$, $\hat{P}^a_i$ and $\hat{P}^u_{i,k}$ be the empirical distributions over samples of local dataset $D^c_i$,  $D^a_i$, and group $D^u_{i,k}$ respectively, $\hat{\mathcal{Q}}^c:=\{\sum_{i=1}^N \lambda^c_{i}\hat{P}^c_{i}: \bm{\lambda}^c \in \Delta_{N-1}\}$ be the client-level uncertainty set, $\hat{\mathcal{Q}}^a:=\{\sum_{i=1}^A \lambda^a_{i}\hat{P}^a_{i}: \bm{\lambda}^a \in \Delta_{A-1}\}$ be the attribute-level uncertainty set, and
$\hat{\mathcal{Q}}^u:=\{\sum_{i=1}^N\sum_{k=1}^{M_i} \lambda^u_{i,k} \hat{P}^u_{i,k}: \bm{\lambda}^u \in \Delta_{M-1}\}$ be the unified uncertainty set. We have
\begin{equation}
\setlength\abovedisplayskip{4pt}
\setlength\belowdisplayskip{4pt}
\hat{\mathcal{Q}}^c \subseteq \hat{\mathcal{Q}}^u, \hat{\mathcal{Q}}^a \subseteq \hat{\mathcal{Q}}^u.
\end{equation}
Moreover, let
$\hat{\mathcal{R}}_{\mathrm{client}}(\theta)$, $\hat{\mathcal{R}}_{\mathrm{attribute}}(\theta)$ and $\hat{\mathcal{R}}_{\mathrm{unified}}(\theta)$ be the empirical risks based on uncertainty sets $\hat{\mathcal{Q}}^c$, $\hat{\mathcal{Q}}^a$ and $\hat{\mathcal{Q}}^u$ respectively.
We have
\begin{equation}
\setlength\abovedisplayskip{4pt}
\setlength\belowdisplayskip{4pt}
\hat{\mathcal{R}}_{\mathrm{client}}(\theta) \leq \hat{\mathcal{R}}_{\mathrm{unified}}(\theta), \hat{\mathcal{R}}_{\mathrm{attribute}}(\theta) \leq \hat{\mathcal{R}}_{\mathrm{unified}}(\theta).
\end{equation}
Furthermore, assume that ${\exists}$ $i$ and $k$, and $j$ ($j \neq i$) s.t. the attribute of samples from ${D}^u_{i,k}$ is same as ${D}^u_{j,k}$ but $\hat{P}^u_{i,k} \neq \hat{P}^u_{j,k}$, and $\hat{P}^u_{i,k} \neq \hat{P}^c_{l}$ for ${\forall}$ $l$, then we have
\begin{equation}
\setlength\abovedisplayskip{4pt}
\setlength\belowdisplayskip{4pt}
(\hat{\mathcal{Q}}^c \cup \hat{\mathcal{Q}}^a) \subsetneq \hat{\mathcal{Q}}^u.
\end{equation}
\end{theorem}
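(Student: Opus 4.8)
The plan is to treat every object as a probability measure on $\mathcal{X}\times\mathcal{Y}$ and to exploit two elementary mixture identities that tie the coarse (client- and attribute-level) distributions to the fine subgroup distributions. First I would record that, since $D^c_i$ is the disjoint union of its subgroups $\{D^u_{i,k}\}_{k=1}^{M_i}$, the client empirical measure is the sample-weighted mixture $\hat{P}^c_i=\sum_{k=1}^{M_i}\alpha_{i,k}\hat{P}^u_{i,k}$ with $\alpha_{i,k}=|D^u_{i,k}|/|D^c_i|\ge 0$ and $\sum_k\alpha_{i,k}=1$; symmetrically, regrouping across clients the subgroups sharing a fixed attribute value $k$ gives $\hat{P}^a_k=\sum_{i=1}^{N}\beta_{i,k}\hat{P}^u_{i,k}$ with $\beta_{i,k}=|D^u_{i,k}|/|D^a_k|\ge 0$ and $\sum_i\beta_{i,k}=1$. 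These two identities are the workhorse of the whole argument.

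For the inclusions $\hat{\mathcal{Q}}^c\subseteq\hat{\mathcal{Q}}^u$ and $\hat{\mathcal{Q}}^a\subseteq\hat{\mathcal{Q}}^u$ I would take an arbitrary $Q=\sum_i\lambda^c_i\hat{P}^c_i$ with $\bm{\lambda}^c\in\Delta_{N-1}$, substitute the first identity, and regroup as $Q=\sum_{i,k}(\lambda^c_i\alpha_{i,k})\hat{P}^u_{i,k}$. The coefficients $\mu_{i,k}:=\lambda^c_i\alpha_{i,k}$ are nonnegative and satisfy $\sum_{i,k}\mu_{i,k}=\sum_i\lambda^c_i\sum_k\alpha_{i,k}=1$, so $\bm{\mu}\in\Delta_{M-1}$ and $Q\in\hat{\mathcal{Q}}^u$. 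The attribute case is identical with $\nu_{i,k}:=\lambda^a_k\beta_{i,k}$. The two risk inequalities then follow with no extra work: the supremum of $\theta\mapsto\mathbb{E}_Q[\ell(\theta,(x,y))]$ over a subset cannot exceed the supremum over the superset, so $\hat{\mathcal{R}}_{\mathrm{client}}(\theta)\le\hat{\mathcal{R}}_{\mathrm{unified}}(\theta)$ and $\hat{\mathcal{R}}_{\mathrm{attribute}}(\theta)\le\hat{\mathcal{R}}_{\mathrm{unified}}(\theta)$.

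For the proper inclusion $(\hat{\mathcal{Q}}^c\cup\hat{\mathcal{Q}}^a)\subsetneq\hat{\mathcal{Q}}^u$, containment is already supplied by the first part, so it remains to produce a witness in $\hat{\mathcal{Q}}^u$ lying in neither set. The natural candidate is the pure subgroup measure $Q^{\star}:=\hat{P}^u_{i,k}$ for the indices $i,k$ given by the hypothesis (take $\lambda^u_{i,k}=1$ and all other weights zero). To rule out $Q^{\star}\in\hat{\mathcal{Q}}^a$, I would use that the sensitive attribute partitions $\mathcal{X}\times\mathcal{Y}$ into disjoint cells $\mathcal{Z}_m=\{(x,y):\text{attribute}=m\}$ on which $\hat{P}^a_m$ is supported; restricting a putative identity $\hat{P}^u_{i,k}=\sum_m\lambda_m\hat{P}^a_m$ to the single cell $\mathcal{Z}_k$ meeting $\mathrm{supp}\,\hat{P}^u_{i,k}$ forces $\lambda_k=1$ and hence $\hat{P}^u_{i,k}=\hat{P}^a_k$. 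But by the second mixture identity $\hat{P}^a_k$ carries positive mass $\beta_{j,k}$ on the samples of $D^u_{j,k}$, which, coming from a different client $j\ne i$, lie outside $\mathrm{supp}\,\hat{P}^u_{i,k}$; together with $\hat{P}^u_{i,k}\ne\hat{P}^u_{j,k}$ this contradicts $\hat{P}^u_{i,k}=\hat{P}^a_k$. The exclusion $Q^{\star}\notin\hat{\mathcal{Q}}^c$ is analogous: restricting $\hat{P}^u_{i,k}=\sum_l\lambda_l\hat{P}^c_l$ to the support region of each client $l\ne i$ forces $\lambda_l=0$, whence $\lambda_i=1$ and $\hat{P}^u_{i,k}=\hat{P}^c_i$, directly contradicting the hypothesis $\hat{P}^u_{i,k}\ne\hat{P}^c_l$ for all $l$.

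The only delicate point, and the step I expect to be the real obstacle, is the support bookkeeping in the last part: the clean restrict-to-a-cell reductions require that the empirical measures of distinct clients occupy essentially disjoint supports, i.e.\ that samples are treated as distinct atoms across clients in the federated setting. I would state this convention explicitly and then verify that the two hypotheses $\hat{P}^u_{i,k}\ne\hat{P}^u_{j,k}$ (with a shared attribute value) and $\hat{P}^u_{i,k}\ne\hat{P}^c_l$ for all $l$ are exactly what is needed to defeat the two reduced equalities $\hat{P}^u_{i,k}=\hat{P}^a_k$ and $\hat{P}^u_{i,k}=\hat{P}^c_i$ that the partition arguments leave standing; everything else is routine simplex algebra.
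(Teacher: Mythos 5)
Your treatment of the inclusions and the risk inequalities is exactly the paper's: the paper likewise expands $Q=\sum_{i}\lambda^c_i\hat{P}^c_i=\sum_{i}\sum_{k}\lambda^c_i\frac{n^u_{i,k}}{n^c_i}\hat{P}^u_{i,k}$ (your $\alpha_{i,k}$), checks the simplex constraints on the regrouped weights, and derives both risk bounds from monotonicity of the supremum over nested feasible sets, so those parts match step for step. Where you genuinely diverge is the strictness claim. The paper's proof there is a bare assertion: it states $\hat{P}^u_{i,k}\notin\hat{\mathcal{Q}}^c$, $\hat{P}^u_{i,k}\notin\hat{\mathcal{Q}}^a$ and $\hat{P}^u_{i,k}\in\hat{\mathcal{Q}}^u$, and stops. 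You use the same witness but actually prove the two non-memberships: the attribute-cell restriction forcing $\lambda_k=1$ and hence $\hat{P}^u_{i,k}=\hat{P}^a_k$, and the client-support restriction forcing $\lambda_i=1$ and hence $\hat{P}^u_{i,k}=\hat{P}^c_i$, each then defeated by a hypothesis of the theorem. Your closing caveat is also on target and exposes a real gap in the paper's version: the hypothesis $\hat{P}^u_{i,k}\neq\hat{P}^c_l$ for all $l$ only excludes the \emph{vertices} of $\hat{\mathcal{Q}}^c$, not general mixtures, and similarly $\hat{P}^u_{i,k}\neq\hat{P}^u_{j,k}$ for a single $j$ does not exclude $\hat{P}^u_{i,k}$ coinciding with a mixture of several other clients' attribute-$k$ subgroup measures (e.g.\ $\hat{P}^u_{i,k}$ uniform on two atoms that two other clients carry as point masses can reproduce $\hat{P}^a_k$ exactly). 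Under your explicit convention that samples from distinct clients are distinct atoms, both restriction arguments go through and the stated hypotheses suffice (indeed nonemptiness of $D^u_{j,k}$ then already yields $\hat{P}^a_k\neq\hat{P}^u_{i,k}$); without that convention, the witness argument as written in the paper is incomplete. So your proposal follows the paper's route on all three parts, but your third part is a completed, and more honest, version of what the paper merely asserts.
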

See Appendix for the proof. Theorem \ref{theorem-subset} shows that both client-level and attribute-level uncertainty sets are subsets of our proposed unified uncertainty set. Therefore, our proposed risk (\ref{group-dro-2}) provides an upper bound for both client-level risk and attribute-level risk, thereby optimizing it can guarantee client-level fairness and attribute-level fairness simultaneously. Actually, our proposed risk also considers the worse cases that the set union of client-level uncertainty set and attribute-level uncertainty set does not contain, which helps to deal with more complex distribution shifts.
However, it is difficult to theoretically and strictly demonstrate whether a certain method can guarantee agnostic distribution, since the distribution of newly added clients can be arbitrary. Therefore, we will show that the model trained by risk (\ref{group-dro-2}) can adapt to those newly added clients with different distributions well empirically (Section \ref{sec64}).

\subsection{Discussion: Individual-level Fairness}
Here we discuss an ideal but impractical way for unified group fairness by treating each sample as a group. Then, the framework will degenerate to an individual-level fairness method with risk:
\begin{equation} \label{sample-dro}
\setlength\abovedisplayskip{4pt}
\setlength\belowdisplayskip{4pt}
\Scale[0.95]{
\begin{aligned}
\mathcal{R}_{\mathrm{individual}}(\theta)&:=\sup _{Q \in \mathcal{Q}^{ind}}\left\{\mathbb{E}_{(x, y) \sim Q}[\ell(\theta, (x, y))] \right\}, \\ \mathcal{Q}^{ind} &:= \{Q | D_{f}\left(Q \| P\right) \leq r\},
\end{aligned}}
\end{equation}
where $P$ is the data-generating distribution of full dataset $D$, $f: \mathbb{R}_{+} \rightarrow \mathbb{R}$ is a convex function with $f(1)=0$, $D_{f}(Q \| P)=\int_{(\mathcal{X}, \mathcal{Y})} f\left(\frac{d Q}{d P}\right) d P$ is \textit{f-divergence} between distribution $Q$ and $P$ defined on $(\mathcal{X}, \mathcal{Y})$ and $r$ is radius of uncertainty set $\mathcal{Q}^{ind}$. Note that the individual-level method is more capable of modeling the agnostic distribution by constructing a wide uncertainty set around distribution $P$. Intuitively, it may help to guarantee unified group fairness.

Unfortunately, the uncertainty set defined on individual level is usually overwhelmingly large leading to the over pessimism problem in practice~\cite{hu2018does, sagawa2019distributionally, liu2021stable}. In fact, our proposed unified uncertainty set $\mathcal{Q}^u$ can be considered as a subset of $\mathcal{Q}^{ind}$ by imposing some structural constrains. Hence, By contrasting with risk (\ref{sample-dro}), our proposed risk (\ref{group-dro-2}) provides a relatively tight upper bound for both client-level risk and attribute-level risk, and help to overcome this pessimism.


\section{Algorithm and Optimization}

In this section, we propose a federated optimization algorithm named \textit{Federated Mirror Descent Ascent with Momentum Acceleration} (FMDA-M) to optimize the risk (\ref{group-dro-2}).

\subsection{Federated Gradient Descent Ascent}
We first introduce the empirical risk on $k$-th group of client $i$ defined as $f_{i, k}(\theta) := \mathbb{E}_{(x, y) \sim \hat{P}^u_{i, k}}[\ell(\theta ;(x, y))]$, where $\hat{P}^u_{i, k}$ is the empirical distribution over samples of group $D^u_{i, k}$.
Then, with the techniques of distributional robustness optimization~\cite{wiesemann2014distributionally, duchi2016variance, namkoong2016stochastic}, the problem of minimizing the risk in Eq.~(\ref{group-dro-2}) can be rewritten as:
\begin{equation} \label{group-ER-global}
\setlength\abovedisplayskip{4pt}
\setlength\belowdisplayskip{4pt}
\Scale[0.97]{
\min\limits _{\theta} \max\limits _{\bm{\lambda}^u\in{\Delta_{M-1}}} \{F(\theta, \bm{\lambda}^u):= {\sum\nolimits_{i=1}^{N}\sum\nolimits_{k=1}^{M_i} \lambda^u_{i, k} f_{i, k}(\theta)}\}},
\end{equation}

To solve this minimax problem, we can alternately optimize model parameters $\theta$ and weights $\bm{\lambda}^u$. Specifically, in each round $r$, the server samples a subset of clients $U^{(r)}$ according to weight $\lambda^c_i = \sum_{k=1}^{M_i} \lambda^u_{i, k}$.
Then client $i \in U^{(r)}$ samples data according to $\lambda^u_{i, k}$ and updates local models via stochastic gradient descent method at each local iteration $t=rE, rE+1, ..., (r+1)E-1$:
\begin{equation} \label{GD} 
\setlength\abovedisplayskip{4pt}
\setlength\belowdisplayskip{4pt}
\Scale[1]{
\theta_{i}^{(t+1)}=\theta_{i}^{(t)}-\eta \nabla_{\theta} l\left(\theta_{i}^{(t)} ; (x_i^{(t)}, y_i^{(t)})\right)}.
\end{equation}
After model aggregation
\begin{equation} \label{aggregation} 
\setlength\abovedisplayskip{4pt}
\setlength\belowdisplayskip{4pt}
\Scale[1]{
\theta^{(r+1)E}=\frac{1}{|U^{(r)}|}\sum\nolimits_{i\in U^{(r)}}\theta_i^{((r+1)E)}},
\end{equation}
clients calculate $\bm{v}^{(r)}=\nabla_{\bm{\lambda}^u}F(\theta, \bm{\lambda}^u)$, i.e. the loss of the global model $\theta^{(r+1)E}$ on their local subgroups $\{D_{i, k}^{u}\}$, and then the server updates group weight $\bm{\lambda}^u$ according to the following rule:
\begin{equation} \label{GAwithProj} 
\setlength\abovedisplayskip{4pt}
\setlength\belowdisplayskip{4pt}
\Scale[1]{
({\bm{\lambda}^u})^{(r+1)}=\prod\nolimits_{\Delta_{M-1}}(({\bm{\lambda}^u})^{(r)}+\gamma E \bm{v}^{(r)})},
\end{equation}
where $\gamma$ is stepsize and $\prod$ is projection operator based on Euclidean distance.

\subsection{Federated Mirror Descent Ascent}
The complexity of step (\ref{GAwithProj}) is usually $O(M log(M))$, so it is computationally expensive for the server when the number of clients or subgroups in FL is large. 
Moreover, the group weights generated by Euclidean distance-based projection are always too sparse, which may lead to instability of convergence in practice. 
In this paper, we adopt mirror gradient ascent method to get $({\bm{\lambda}^u})^{(r+1)}$ instead of step (\ref{GAwithProj}):
\begin{equation} \label{mirrorGA}
\setlength\abovedisplayskip{4pt}
\setlength\belowdisplayskip{4pt}
\Scale[0.95]{
\begin{aligned}
({\bm{\lambda}^u})^{(r+1)} & = 
\mathop{\arg\max}_{\bm{\lambda}\in \Delta_{M-1}} \{
F(\theta^{(r+1)E}, (\bm{\lambda}^u)^{(r)}) \\ + \langle E&\bm{v}^{(r)}, {(\bm{\lambda}^u)^{(r)}}-{\bm{\lambda}}\rangle - \frac{1}{\gamma} D_{h}\left(\bm{\lambda} \| (\bm{\lambda}^u)^{(r)}\right)
\},
\end{aligned}}
\end{equation}
where $\gamma>0$ is stepsize and $D_{h}(\cdot,\cdot)$ is a Bregman distance based on a convex function $h(\cdot)$. The first two terms of the above subproblem (\ref{mirrorGA}) is a linear approximation of $F(\theta^{(r+1)E}, \bm{\lambda})$, and the last term is a Bregman distance between $\bm{\lambda}$ and $(\bm{\lambda}^u)^{(r)}$. Note that we can choose a suitable function $h(\cdot)$ to solve the problems mentioned above. Actually, for the negative entropy function $h(\bm{x})=\sum_{i=1}^{n}\left(x_{i} \ln x_{i}-x_i\right)$, the subproblem (\ref{mirrorGA}) has an explicit solution:
\begin{equation} \label{ExplicitSolution}
\setlength\abovedisplayskip{4pt}
\setlength\belowdisplayskip{4pt}
(\lambda^u_{i,k})^{(r+1)} = \frac{(\lambda^u_{i,k})^{(r)}e^{\gamma E v^{(r)}_{i,k}}}{\sum^{N}_{i=1}\sum^{M_i}_{k=1}(\lambda^u_{i,k})^{(r)}e^{\gamma E v^{(r)}_{i,k}}}.
\end{equation}
By replacing step (\ref{GAwithProj}) with step (\ref{ExplicitSolution}), the calculations for updating group weights are almost free. Moreover, the weights calculated by step (\ref{ExplicitSolution}) are smoother, which is important for stability of convergence. We call the proposed algorithm as Federated Mirror Descent Ascent (FMDA).

\subsection{Momentum Acceleration}
Since communication costs are the principal constraint in FL \cite{mcmahan2017communication}, we explore to improve the convergence rate of the above federated algorithm by leveraging momentum acceleration techniques \cite{nesterov1983method, li2017convergence, ochs2018local}. Specifically, denoting the results obtained in step (\ref{aggregation}) and step (\ref{ExplicitSolution}) as $\widetilde{{\theta}}^{(r+1)E}$ and $(\widetilde{\bm{\lambda}}^u)^{(r+1)}$ respectively, we update model parameters as below:
\begin{equation} \label{momentum-x}
\setlength\abovedisplayskip{4pt}
\setlength\belowdisplayskip{4pt}
\theta^{(r+1)E}=\widetilde{\theta}^{(r+1)E}+\beta_{\theta}(\widetilde{\theta}^{(r+1)E}-\widetilde{\theta}^{rE}),
\end{equation}
and update group weights according to the following rule:
\begin{equation} \label{momentum-y}
\setlength\abovedisplayskip{4pt}
\setlength\belowdisplayskip{4pt}
(\bm{\lambda}^u)^{(r+1)}=(\bm{\lambda}^u)^{(r)}+\beta_{\bm{\lambda}}((\widetilde{\bm{\lambda}}^u)^{(r+1)}-(\bm{\lambda}^u)^{r}),
\end{equation}
where $\beta_{\theta}$ and $\beta_{\bm{\lambda}}$ are momentum coefficients. The second terms of step (\ref{momentum-x}) and step (\ref{momentum-y}) are momentum terms, which contains historical gradient information and helps speed up the convergence of our algorithm.

The details of our algorithm Federated Mirror Descent Ascent with Momentum Acceleration (FMDA-M) are formally presented in Appendix.

\subsection{Convergence Analysis}
Here we give the convergence rate of the proposed FMDA-M algorithm in convex-concave setting.
\begin{theorem}
Suppose that each function $f_{i,k}$ is convex and L-smooth, global function $F$ is linear in $\bm{\lambda}$ and L-smooth and the gradient w.r.t. $\theta$ and $\lambda$, model parameters $\theta$, the variance of stochastic gradient method w.r.t. $\theta$ and $\lambda$ are bounded. If we optimize (\ref{group-ER-global}) using FMGDA-M algorithm with local iterations $E=O(T^{\frac{1}{4}})$, learning rate for model parameters $\eta=O(T^{-\frac{1}{2}})$ and stepsize for group weights $\gamma=O(T^{-\frac{1}{2}})$, and then it holds that
\begin{equation}
\setlength\abovedisplayskip{4pt}
\setlength\belowdisplayskip{4pt}
\varepsilon_T \leq O(T^{-\frac{1}{2}}).
\end{equation}
\end{theorem}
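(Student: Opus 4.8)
The plan is to treat the optimization problem (\ref{group-ER-global}) as a stochastic convex-concave saddle-point problem — convex and $L$-smooth in $\theta$, and linear (hence concave) in $\bm{\lambda}$ over the simplex — and to bound the expected primal-dual gap of the averaged iterates,
$$\varepsilon_T = \mathbb{E}\big[\max_{\bm{\lambda}\in\Delta_{M-1}} F(\bar\theta_T,\bm{\lambda}) - \min_{\theta} F(\theta,\bar{\bm{\lambda}}_T)\big],$$
where $\bar\theta_T$ and $\bar{\bm{\lambda}}_T$ are running averages of the communicated iterates. First I would establish the two one-step inequalities that drive any mirror descent-ascent analysis. For the $\theta$-block I would apply the Euclidean descent lemma to the local SGD step (\ref{GD}), and use $L$-smoothness and convexity to turn the inner product $\langle\nabla_\theta F,\theta-\theta^\star\rangle$ into a function-value gap. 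For the $\bm{\lambda}$-block I would invoke the three-point (generalized Pythagorean) identity for the Bregman divergence $D_h$ associated with the negative entropy; because $F$ is linear in $\bm{\lambda}$, the ascent inner product $\langle \bm{v}^{(r)},\bm{\lambda}^\star-\bm{\lambda}^{(r)}\rangle$ converts \emph{exactly} into $F(\theta,\bm{\lambda}^\star)-F(\theta,\bm{\lambda}^{(r)})$.

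Next I would have to handle the three federated-specific complications in turn. (i) \emph{Client drift:} because each client takes $E$ local steps between communications, the local parameters $\theta_i^{(t)}$ drift away from the last aggregated model, and I would bound $\mathbb{E}\|\theta_i^{(t)}-\theta^{(rE)}\|^2 = O(\eta^2 E^2 G^2)$ (with $G$ the gradient bound) and feed this as an additive error into the $\theta$-side inequality. (ii) \emph{Stochasticity and coupled sampling:} since the server samples clients with probability $\lambda_i^c=\sum_k\lambda_{i,k}^u$ and each client then samples a subgroup according to $\lambda_{i,k}^u$, I would verify that the resulting estimators for both $\nabla_\theta F$ and $\bm{v}$ are unbiased, take conditional expectations, and absorb the second moments through the bounded-variance assumption ($\sigma^2$). (iii) \emph{Momentum:} I would introduce a Lyapunov/auxiliary sequence folding the extrapolation updates (\ref{momentum-x})--(\ref{momentum-y}) into the distance terms $\|\theta^{(r)}-\theta^\star\|^2$ and $D_h(\bm{\lambda}^\star\|\bm{\lambda}^{(r)})$, so that the momentum contributions telescope and leave only lower-order residuals controlled by $\beta_\theta,\beta_{\bm{\lambda}}$.

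Summing the combined inequality over rounds and inner steps, the distance terms telescope, and Jensen's inequality transfers the accumulated gaps to the averaged iterates, yielding a bound of the schematic form
$$\varepsilon_T \le \frac{\|\theta^{(0)}-\theta^\star\|^2}{2\eta E T} + \frac{D_h(\bm{\lambda}^\star\|\bm{\lambda}^{(0)})}{\gamma E T} + O(\eta\sigma^2) + O(\gamma\sigma^2) + O(\eta^2 E^2 G^2) + (\text{momentum residual}).$$
The last step is to substitute $E=O(T^{1/4})$, $\eta=O(T^{-1/2})$, $\gamma=O(T^{-1/2})$ and check that every term is $O(T^{-1/2})$: the two leading terms become $O(T^{-3/4})$, the variance terms $O(T^{-1/2})$, and crucially the drift term becomes $O(\eta^2 E^2)=O(T^{-1}\cdot T^{1/2})=O(T^{-1/2})$, which is exactly the balance that pins down the choice $E=O(T^{1/4})$ as the largest admissible local-iteration count.

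The hard part, I expect, is not any single block but the bookkeeping where these complications interact: propagating the client-drift error correctly through the momentum-extrapolated iterates, and maintaining unbiasedness when the sampling distribution over clients and subgroups is itself the optimization variable $\bm{\lambda}$. Combining the drift bound with the momentum Lyapunov function without losing an extra factor of $E$ is what forces the $E=O(T^{1/4})$ scaling, and verifying that this balance holds \emph{simultaneously} for the descent and the ascent side is the crux of the argument.
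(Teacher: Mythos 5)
Your proposal matches the paper's proof essentially step for step: both bound the primal--dual gap of the averaged iterates by combining (i) a Euclidean one-step inequality for $\theta$ carrying an $O(\eta^2 E^2)$ client-drift bound (the paper's Lemma B.2, which also folds in a gradient-dissimilarity term $\Gamma$), (ii) a Bregman three-point inequality for the entropic mirror-ascent step in $\bm{\lambda}$ exploiting linearity of $F$ in $\bm{\lambda}$, (iii) unbiasedness of the $\bm{\lambda}$-weighted client/subgroup sampling with bounded variance absorbed as $B^2/K$, and (iv) telescoping plus Jensen on the averaged iterates, with the identical balance $\eta^2E^2 = O(T^{-1/2})$ pinning down $E=O(T^{1/4})$. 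The only divergence is your Lyapunov folding of the momentum extrapolation: the paper instead keeps $\mathbb{E}\|\beta_{\theta}(\widetilde{\theta}^{(t+1)}-\widetilde{\theta}^{(t)})\|_2^2$ as an additive error in the one-step inequality and sums it as a geometric series into an $O(1/T)$ residual --- a cosmetic difference (and your treatment is arguably the more rigorous handling of that sub-step).
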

See Appendix for the proof.

\section{Experiments}

In this section, we validate the effectiveness of our method on three datasets with different distribution shift.

\subsection{Experimental Setup}
\textbf{Federated Datasets.}
(1) Fashion-MNIST (FM) dataset \cite{xiao2017fashion}: FM is a classical image classification dataset containing 60,000 training examples with 10 categories. For FM, we set the target label as the sensitive attribute and consider \textbf{\textit{label distribution shift}} across clients. As shown in Figure~\ref{fig-distribution}, we consider 4 different degrees of Non-IID settings: (a) IID, (b) weakly Non-IID, (c) strongly Non-IID, (d) extremely Non-IID. We run our algorithm and compared baselines on FM dataset with logistic regression model. (2) Digit-Five (D5) dataset \cite{xu2018deep, peng2019moment, zhao2020multi}: D5 includes digit images with 10 categories sampled from 5 domains. For D5, we set the domain (i.e. data collection source) as the sensitive attribute and consider 4 different degrees of \textbf{\textit{feature distribution shift}} across clients. We use a 2-layer CNN with a linear classifier. (3) UCI Adult dataset\cite{Dua:2019}: Adult is a census dataset with 32,561 examples, and each sample has 14 features (including race, gender and so on) and a target label indicating whether the income is greater or less than \$50K. For Adult, we set gender and income as the protected attributes and consider 4 different degrees of \textbf{\textit{unbalance}} setting (i.e. the amount of data varies greatly across both clients and attributes), which is hard to avoid in real FL scenario. We use a logistic regression model to predict the income. More details in Appendix.

\begin{figure}[t]
\centering                         
\subfigure[IID]{
\begin{minipage}[t]{0.45\linewidth}
\centering  
\includegraphics[width=1in]{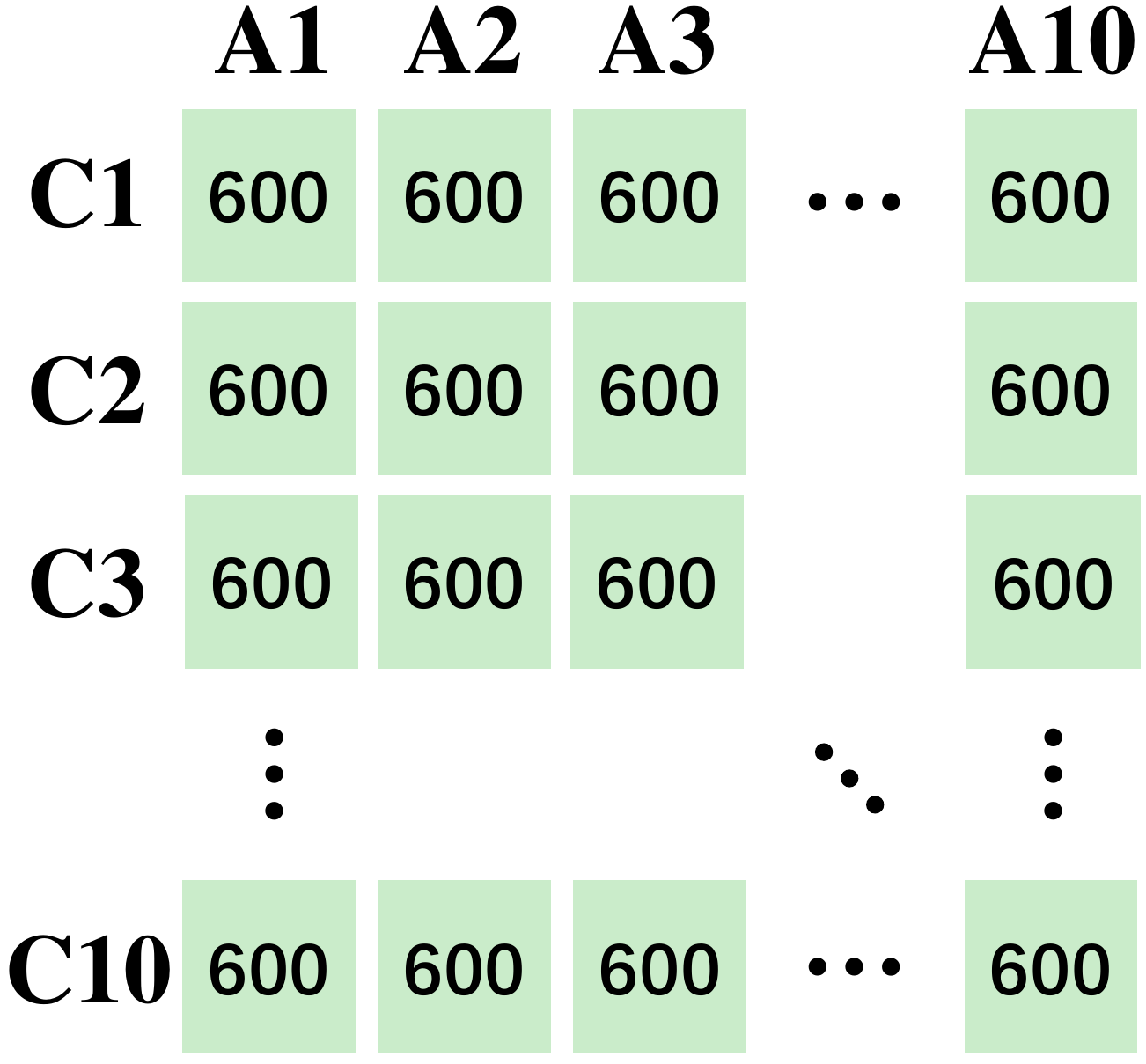}             
\end{minipage}}
\subfigure[Weakly Non-IID]{
\begin{minipage}[t]{0.45\linewidth}
\centering  
\includegraphics[width=1in]{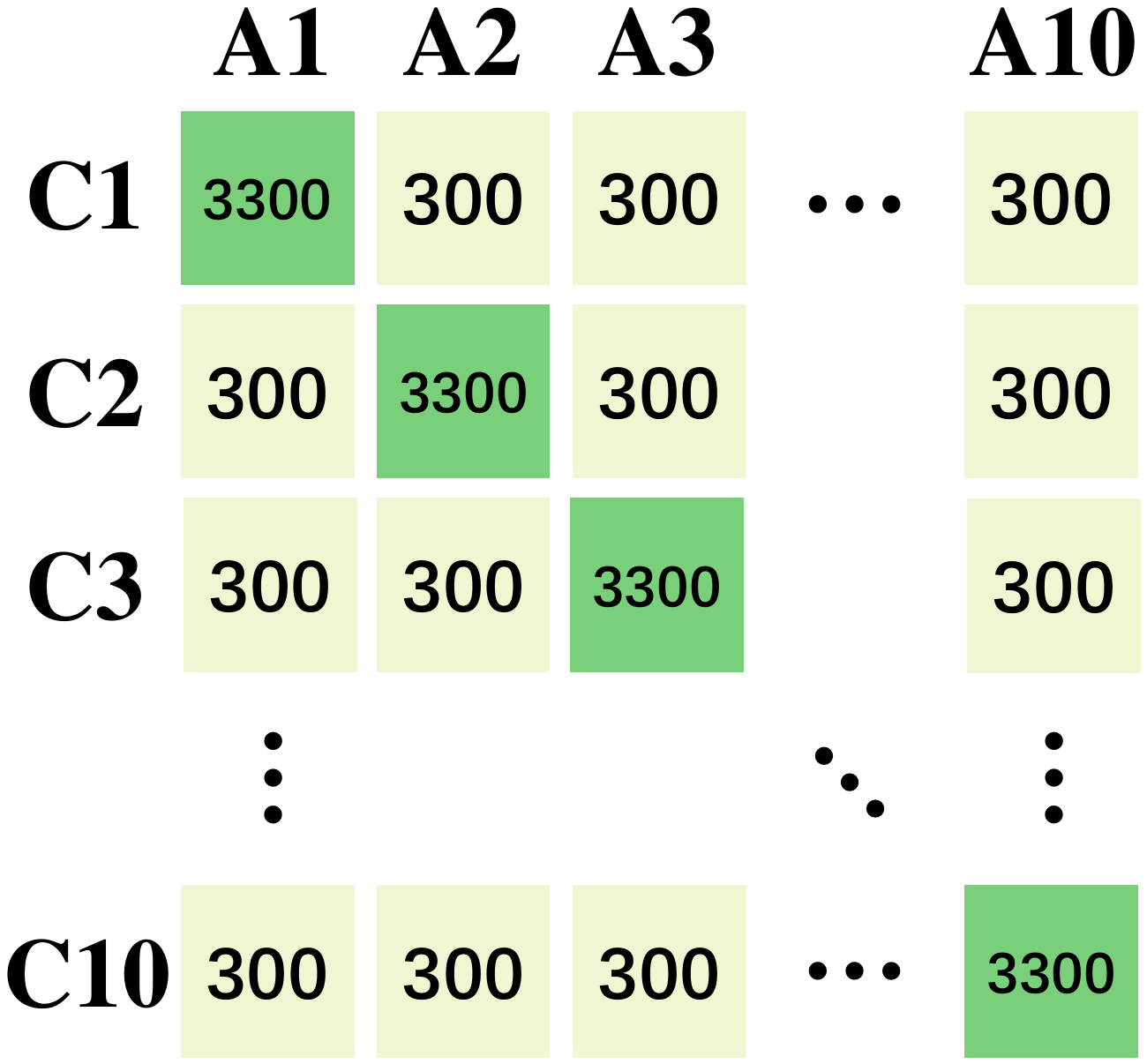}             
\end{minipage}}
\subfigure[Strongly Non-IID]{
\begin{minipage}[t]{0.45\linewidth}
\centering  
\includegraphics[width=1in]{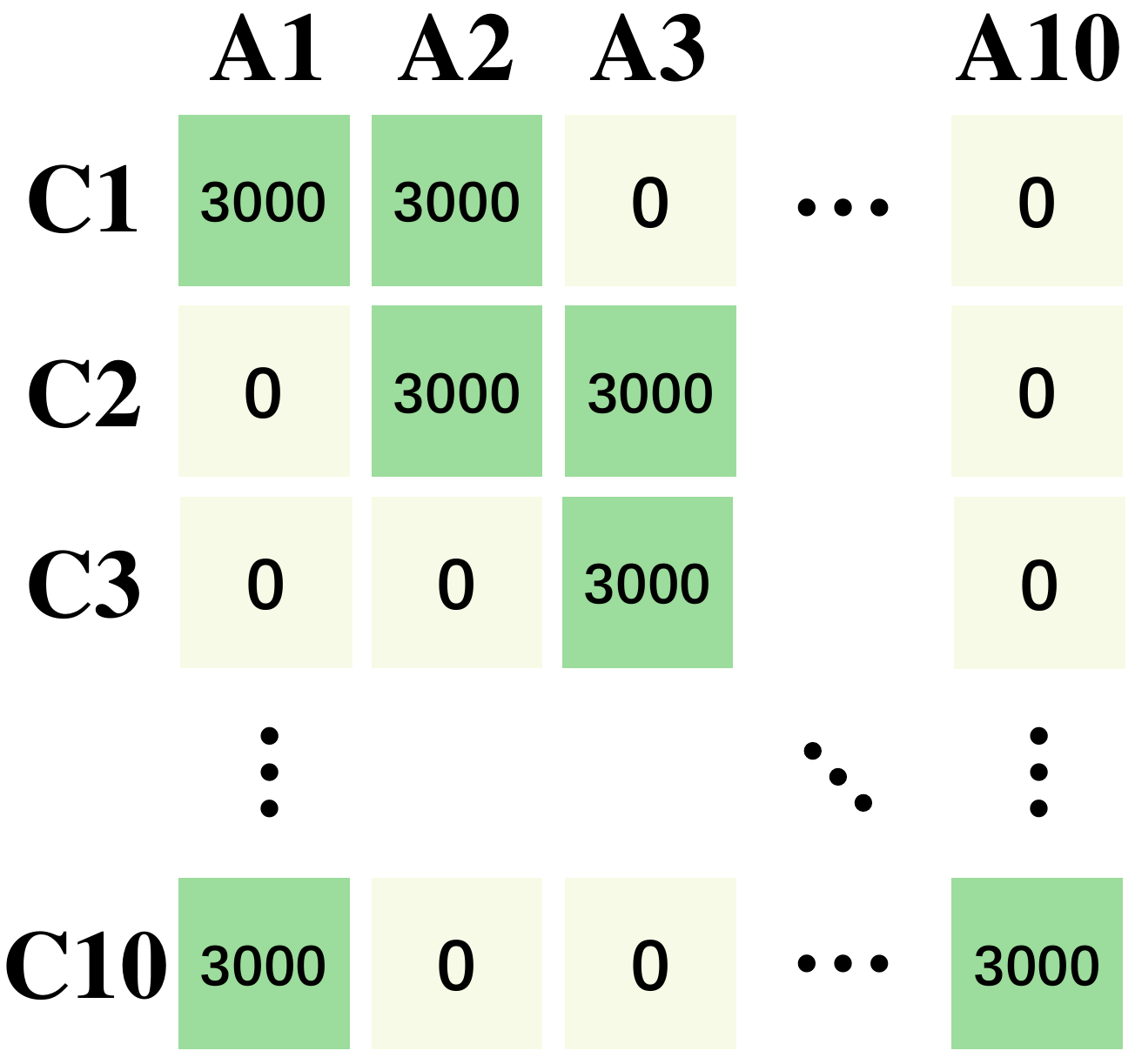}      

\end{minipage}}
\subfigure[Extremely Non-IID]{
\begin{minipage}[t]{0.45\linewidth}
\centering  
\includegraphics[width=1in]{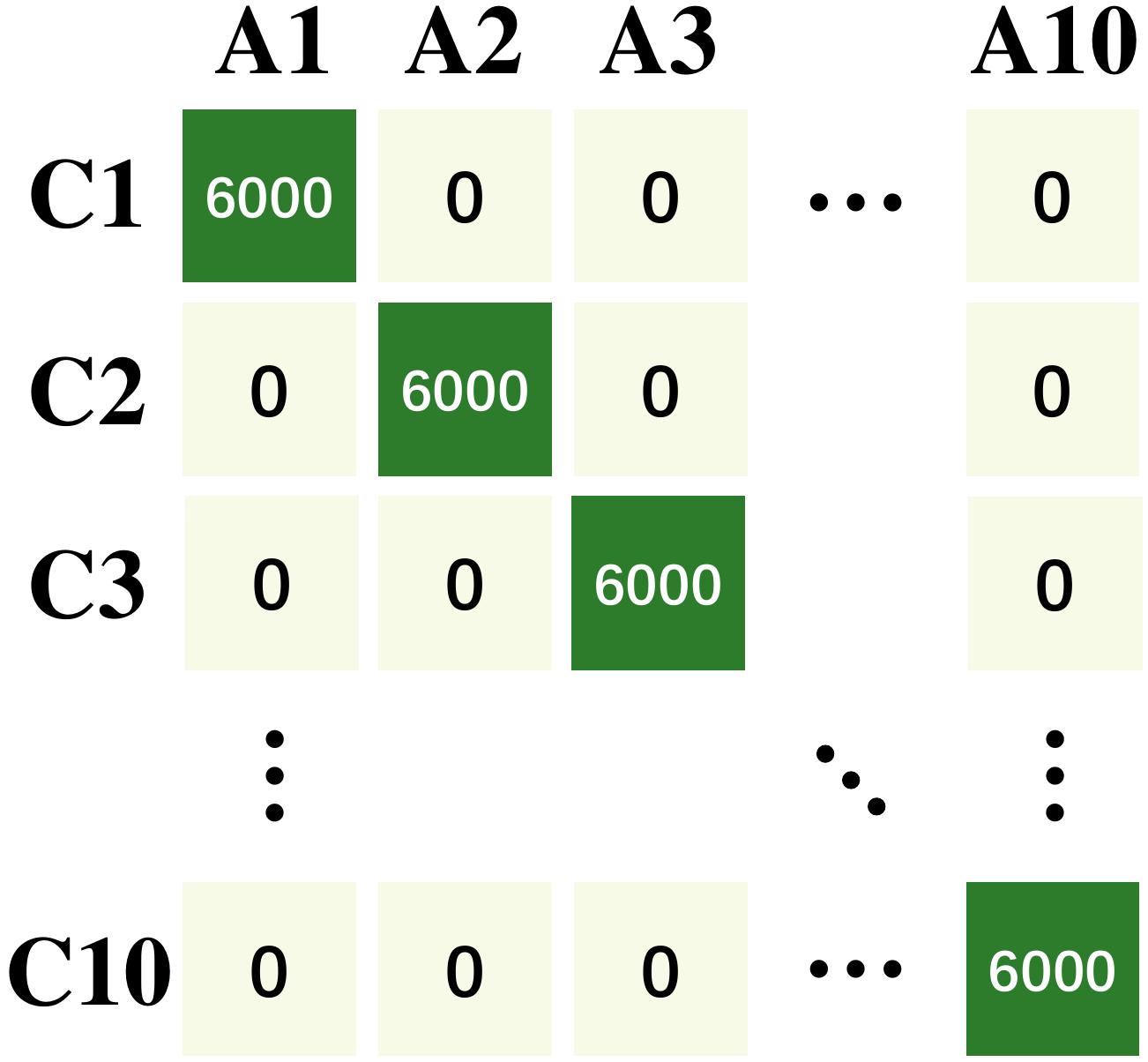}\label{setting3}  
\end{minipage}}
\vspace{-0.5em}
\caption{Various training distribution settings on Fashion-MNIST. The number of C$i$-A$k$ demotes the sample size of group $D^u_{i,k}$.}
\vspace{-1.3em}
\label{fig-distribution}                       
\end{figure}

\begin{table*}[t]
\caption{Experimental results of attribute-level fairness.}
\centering
\scalebox{0.7}{
\begin{tabular}{c|c|cccc|cccc}
\hline
 & Metrics & \multicolumn{4}{c|}{Robustness: $Robustness$ (\%)} & \multicolumn{4}{c}{Fairness: $Disparity$} \\ \hline
\multirow{2}{*}{Dataset} & \multirow{2}{*}{Method} & \multirow{2}{*}{IID} & Weakly & Strongly & Extremely & \multirow{2}{*}{IID} & Weakly & Strongly & Extremely \\
 &  &  & Non-IID & Non-IID & Non-IID &  & Non-IID & Non-IID & Non-IID \\ \hline
\multirow{2}{*}{} & Centralized ERM & \multicolumn{4}{c|}{56.67±0.30} & \multicolumn{4}{c}{0.122±0.001} \\
 & Centralized DRO & \multicolumn{4}{c|}{\textbf{68.98±0.35}} & \multicolumn{4}{c}{\textbf{0.081±0.002}} \\
 {Fashion} & FedAvg & 55.27±0.33 & 53.41±0.44 & 41.63±0.28 & 40.60±0.20 & 0.121±0.001 & 0.123±0.003 & 0.151±0.003 & 0.152±0.003 \\
 {MNIST}& DRFA & 61.90±0.53 & 61.50±0.87 & 56.11±0.44 & 65.63±0.47 & 0.107±0.001 & 0.109±0.002 & 0.119±0.003 & \ 0.101±0.001 \\
 \multirow{2}{*}{} & IndA & 55.04±0.85 & 58.46±0.99 & 45.91±0.81 & 51.35±0.61 & 0.139±0.006 & 0.136±0.006 & 0.145±0.006 & 0.130±0.007 \\
 & FMDA-M (Ours) & \textbf{68.06±0.42} & \textbf{66.60±0.68} & \textbf{68.31±0.38} & \textbf{67.72±0.23} & \textbf{0.082±0.002} & \textbf{0.086±0.002} & \textbf{0.085±0.002} & \textbf{0.087±0.002} \\ \hline
{} & Centralized ERM & \multicolumn{4}{c|}{83.07±0.48} & \multicolumn{4}{c}{0.059±0.001} \\
{} & Centralized DRO & \multicolumn{4}{c|}{\textbf{84.48±0.47}} & \multicolumn{4}{c}{\textbf{0.046±0.002}} \\
{Digit} & FedAvg & 81.72±0.22 & 81.13±0.35 & 81.60±0.33 & 78.51±0.64 & 0.063±0.002 & 0.065±0.002 & 0.053±0.002 & 0.073±0.003 \\
{Five} & DRFA & 81.22±0.34 & 82.02±0.33 & 82.05±0.50 & 81.08±0.32 & 0.064±0.001 & 0.062±0.001 & 0.058±0.001 & 0.060±0.001 \\
{} & IndA & 81.14±0.28 & 81.03±0.38 & 81.14±0.21 & 78.99±0.51 & 0.067±0.003 & 0.067±0.002 & 0.065±0.002 & 0.070±0.004 \\
{} & FMDA-M (Ours) & \textbf{85.10±0.23} & \textbf{83.91±0.24} & \textbf{84.20±0.22} & \textbf{81.98±0.36} & \textbf{0.043±0.001} & \textbf{0.044±0.002} & \textbf{0.054±0.001} & \textbf{0.051±0.001} \\ \hline
\multirow{6}{*}{Adult} & Centralized ERM & \multicolumn{4}{c|}{20.85±0.47} & \multicolumn{4}{c}{0.318±0.003} \\
 & Centralized DRO & \multicolumn{4}{c|}{\textbf{70.42±0.56}} & \multicolumn{4}{c}{\textbf{0.031±0.003}} \\
 & FedAvg & 20.26±0.29 & 25.82±0.49 & 20.70±0.42 & 66.04±0.36 & 0.322±0.002 & 0.290±0.002 & 0.323±0.003 & 0.063±0.001 \\
 & DRFA & 20.19±0.32 & 31.15±0.55 & 20.62±0.55 & 67.85±0.61 & 0.322±0.003 & 0.264±0.003 & 0.324±0.003 & 0.046±0.002 \\
 & IndA & 21.60±0.58 & 25.69±0.61 & 21.90±0.63 & 65.21±0.68 & 0.315±0.004 & 0.292±0.003 & 0.324±0.005 & 0.079±0.004 \\
 & FMDA-M (Ours) & \textbf{70.20±0.35} & \textbf{71.16±0.41} & \textbf{70.74±0.44} & \textbf{70.57±0.48} & \textbf{0.031±0.002} & \textbf{0.030±0.002} & \textbf{0.031±0.002} & \textbf{0.032±0.001} \\ \hline
\end{tabular}
}
\label{table-group}
\end{table*}

\begin{table*}[t]
\caption{Experimental results of client-level fairness.}
\centering
\scalebox{0.7}{
\begin{tabular}{c|c|cccc|cccc}
\hline
 & Metrics & \multicolumn{4}{c|}{Robustness: $Robustness$ (\%)} & \multicolumn{4}{c}{Fairness: $Disparity$} \\ \hline
\multirow{2}{*}{Dataset} & \multirow{2}{*}{Method} & \multirow{2}{*}{IID} & Weakly & Strongly & Extremely & \multirow{2}{*}{IID} & Weakly & Strongly & Extremely \\
 &  &  & Non-IID & Non-IID & Non-IID &  & Non-IID & Non-IID & Non-IID \\ \hline
{} & FedAvg & 84.34±0.47 & 70.24±0.64 & 65.92±0.47 & 44.04±0.38 & \textbf{0.004±0.001} & 0.059±0.002 & 0.091±0.006 & 0.146±0.004 \\
{Fashion} & DRFA & \textbf{84.39±0.45} & 71.25±0.88 & 70.93±0.80 & 66.54±0.68 & 0.004±0.001 & 0.055±0.003 & 0.064±0.002 & 0.099±0.002 \\
{MNIST} & IndA & 83.82±0.71 & 72.75±1.03 & 69.69±1.09 & 53.00±1.06 & 0.004±0.001 & 0.057±0.003 & 0.081±0.008 & 0.125±0.006 \\
{} & FMDA-M (Ours) & 81.05±0.36 & \textbf{72.78±0.63} & \textbf{73.82±0.59} & \textbf{68.45±0.28} & 0.005±0.001 & \textbf{0.039±0.002} & \textbf{0.042±0.002} & \textbf{0.069±0.002} \\ \hline
{} & FedAvg & \textbf{90.58±0.30} & 84.79±0.42 & 84.42±0.33 & 74.97±0.40 & 0.001±0.000 & 0.035±0.001 & 0.054±0.002 & 0.083±0.005 \\
{Digit} & DRFA & 90.23±0.34 & 86.02±0.59 & 83.23±0.41 & 79.86±0.30 & 0.001±0.000 & 0.031±0.001 & 0.052±0.002 & 0.060±0.002 \\
{Five} & IndA & 89.61±0.58 & 84.13±0.77 & 82.97±0.71 & 77.64±0.99 & 0.002±0.001 & 0.036±0.002 & 0.056±0.003 & 0.074±0.004 \\
{} & FMDA-M (Ours) & 87.81±0.25 & \textbf{86.18±0.36} & \textbf{85.45±0.37} & \textbf{81.34±0.34} & \textbf{0.001±0.000} & \textbf{0.023±0.001} & \textbf{0.037±0.002} & \textbf{0.048±0.003} \\ \hline
\multirow{4}{*}{Adult} & FedAvg & \textbf{82.11±0.43} & 68.61±0.48 & 77.95±0.49 & 66.89±0.44 & 0.004±0.001 & 0.091±0.004 & 0.055±0.003 & 0.067±0.002 \\
 & DRFA & 82.01±0.33 & 69.77±0.55 & \textbf{78.29±0.57} & 68.53±0.60 & 0.004±0.001 & 0.082±0.003 & 0.053±0.002 & 0.051±0.002 \\
 & IndA & 81.31±0.61 & 68.53±0.61 & 77.98±0.74 & 64.72±0.90 & 0.003±0.001 & 0.090±0.005 & 0.052±0.004 & 0.074±0.004 \\
 & FMDA-M (Ours) & 75.26±0.52 & \textbf{74.03±0.47} & 74.98±0.43 & \textbf{71.84±0.48} & \textbf{0.003±0.000} & \textbf{0.012±0.001} & \textbf{0.010±0.001} & \textbf{0.029±0.002} \\ \hline
\end{tabular}
}
\label{table-client}
\end{table*}

\begin{figure*}[htbp]
\centering
\subfigure[Fashion-MNIST]{
\begin{minipage}[t]{0.3\linewidth}
\centering 
\includegraphics[width=2in]{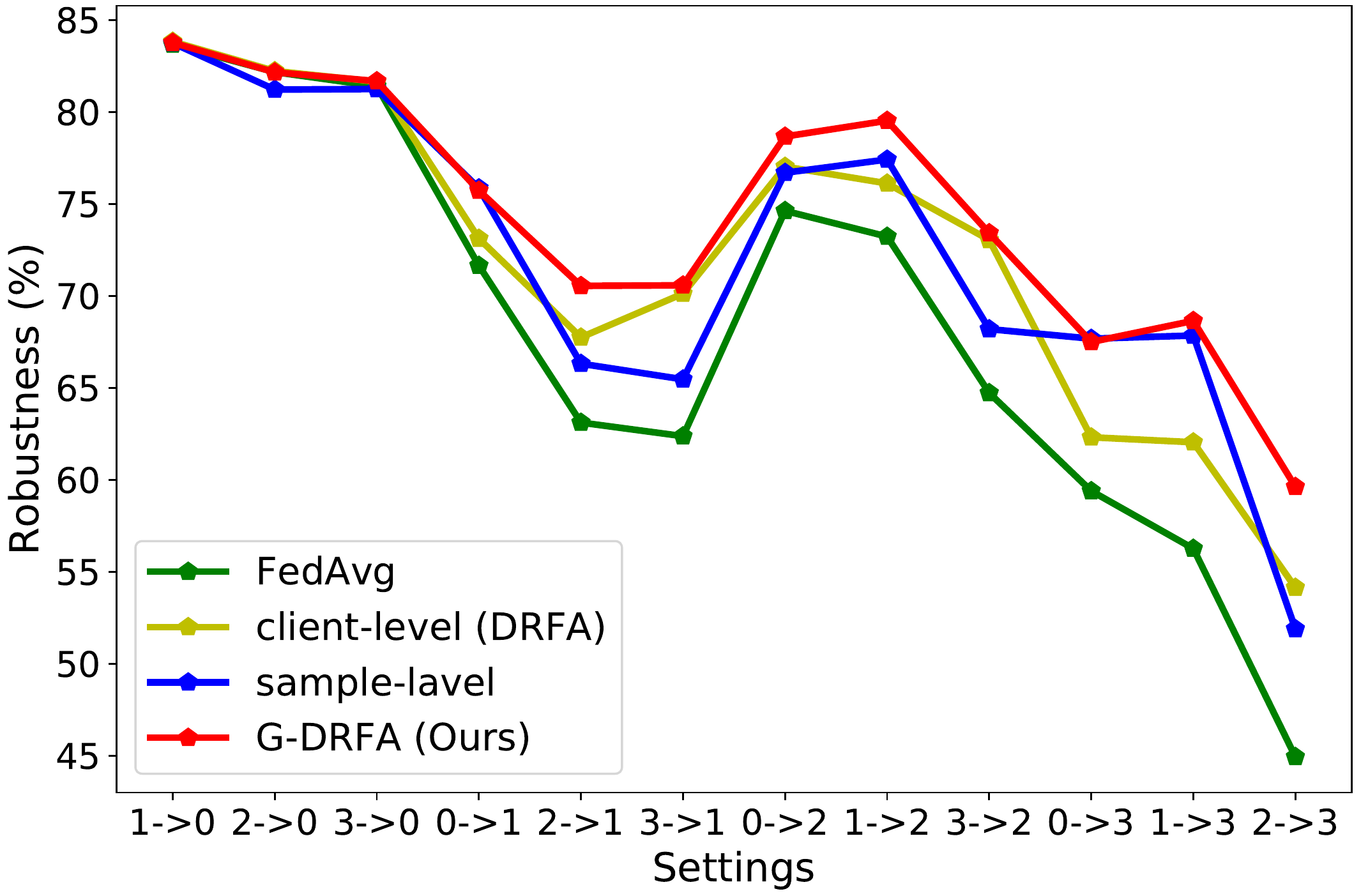}
\includegraphics[width=2in]{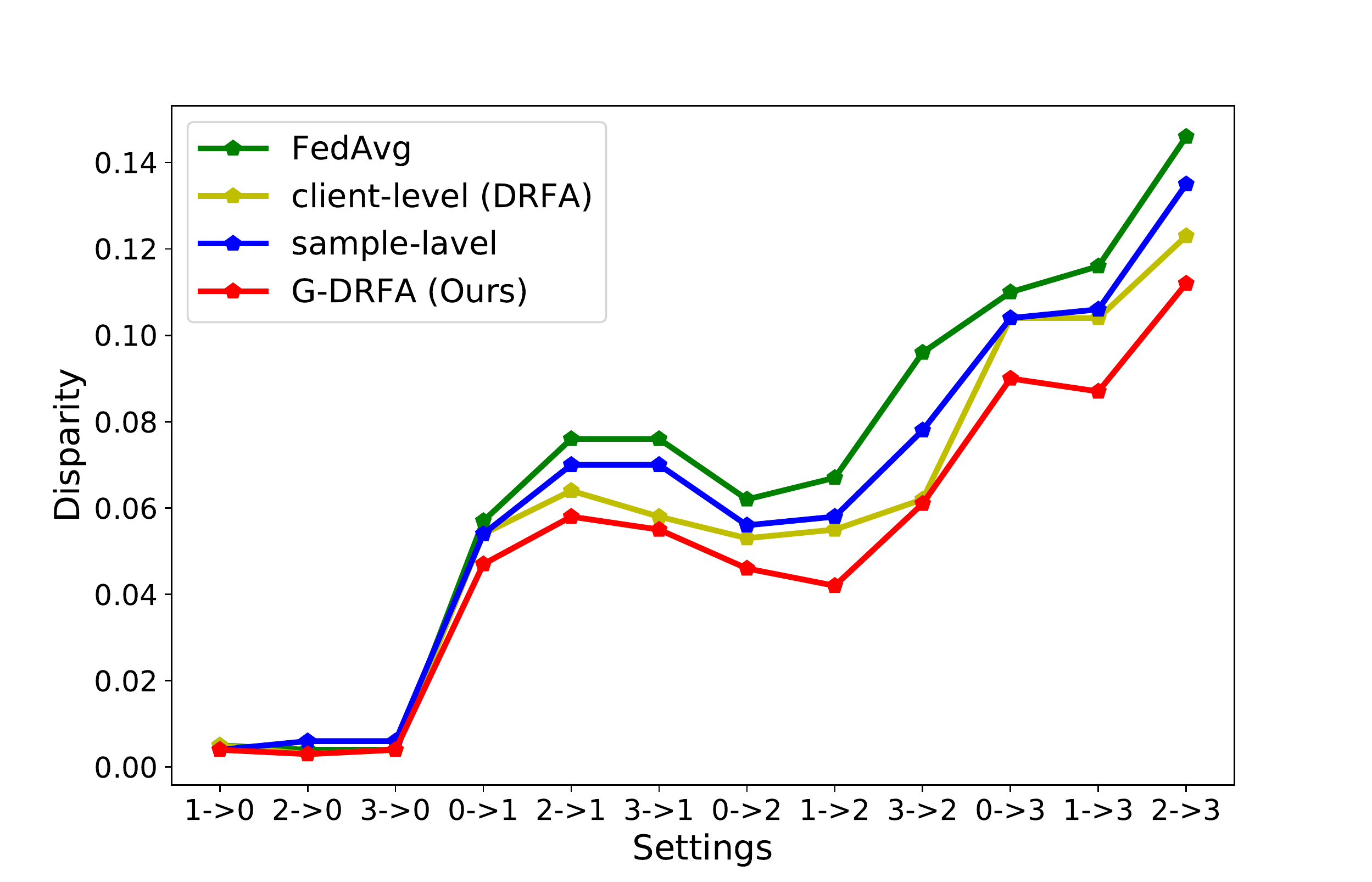}
\label{fig-agnostic-FM}
\end{minipage}}
\subfigure[Digit-Five]{
\begin{minipage}[t]{0.3\linewidth}
\centering 
\includegraphics[width=2in]{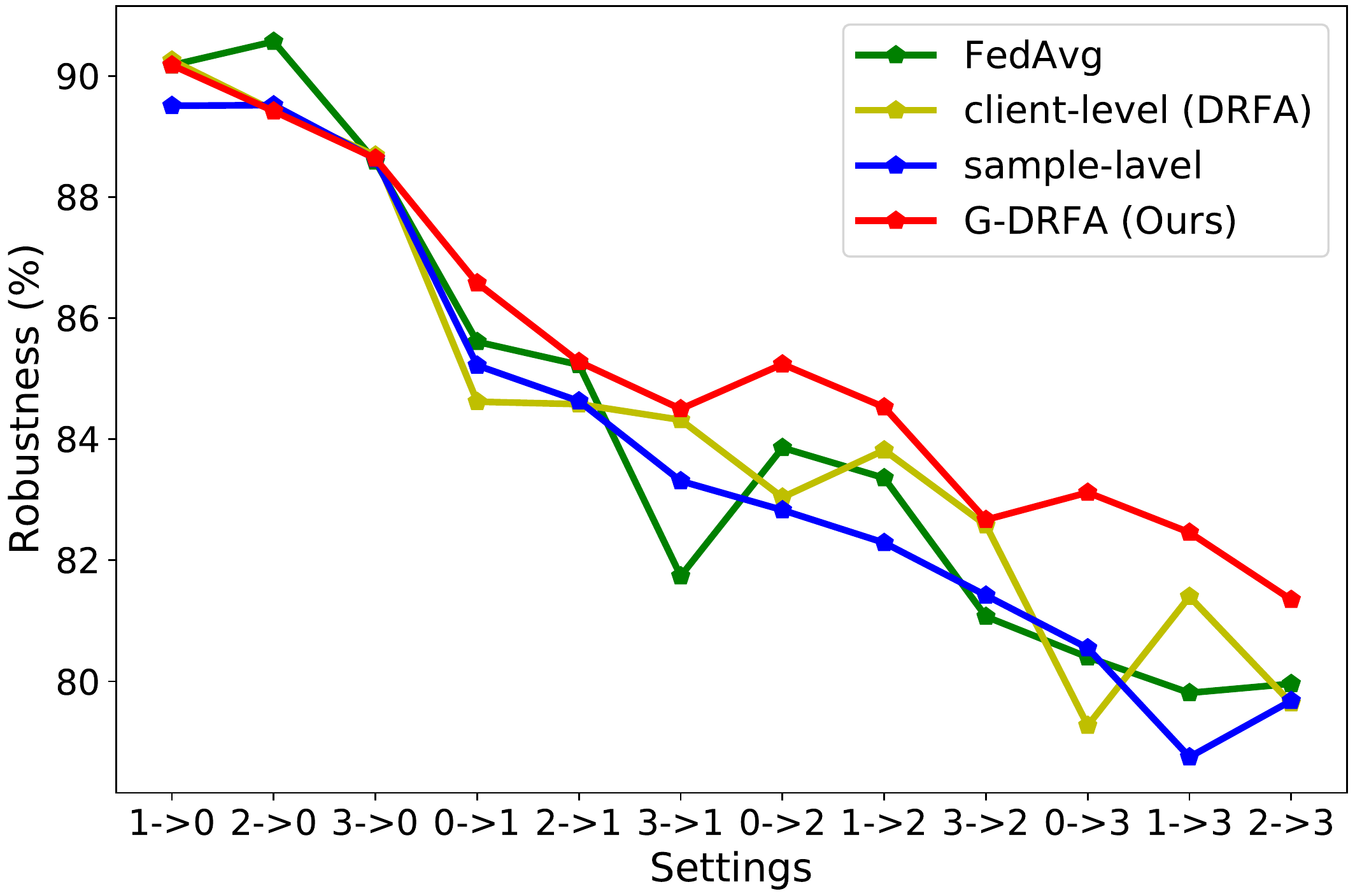} 
\includegraphics[width=2in]{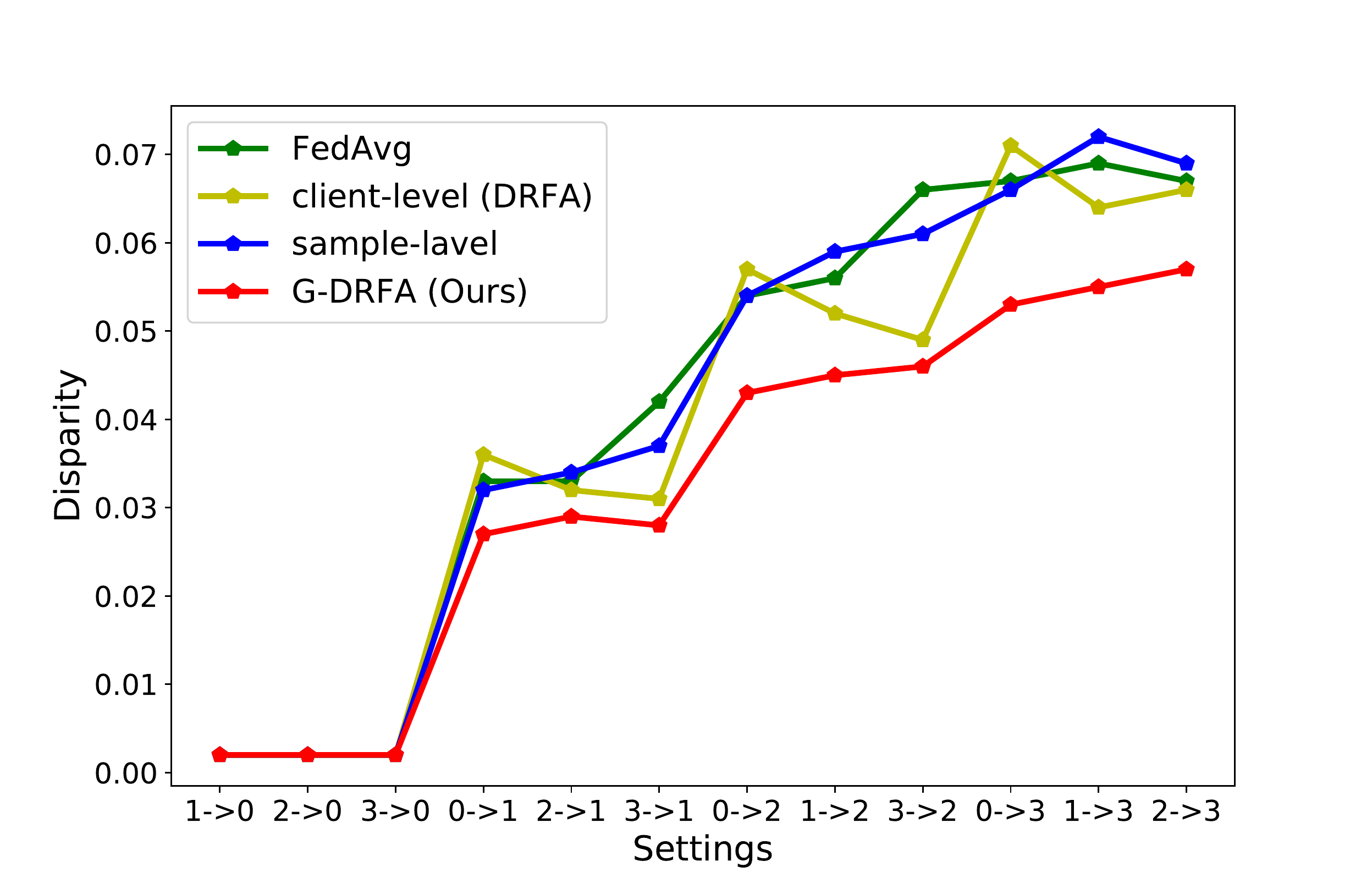} 
\label{fig-agnostic-D5} 
\end{minipage}}
\subfigure[Adult]{
\begin{minipage}[t]{0.3\linewidth}
\centering 
\includegraphics[width=2in]{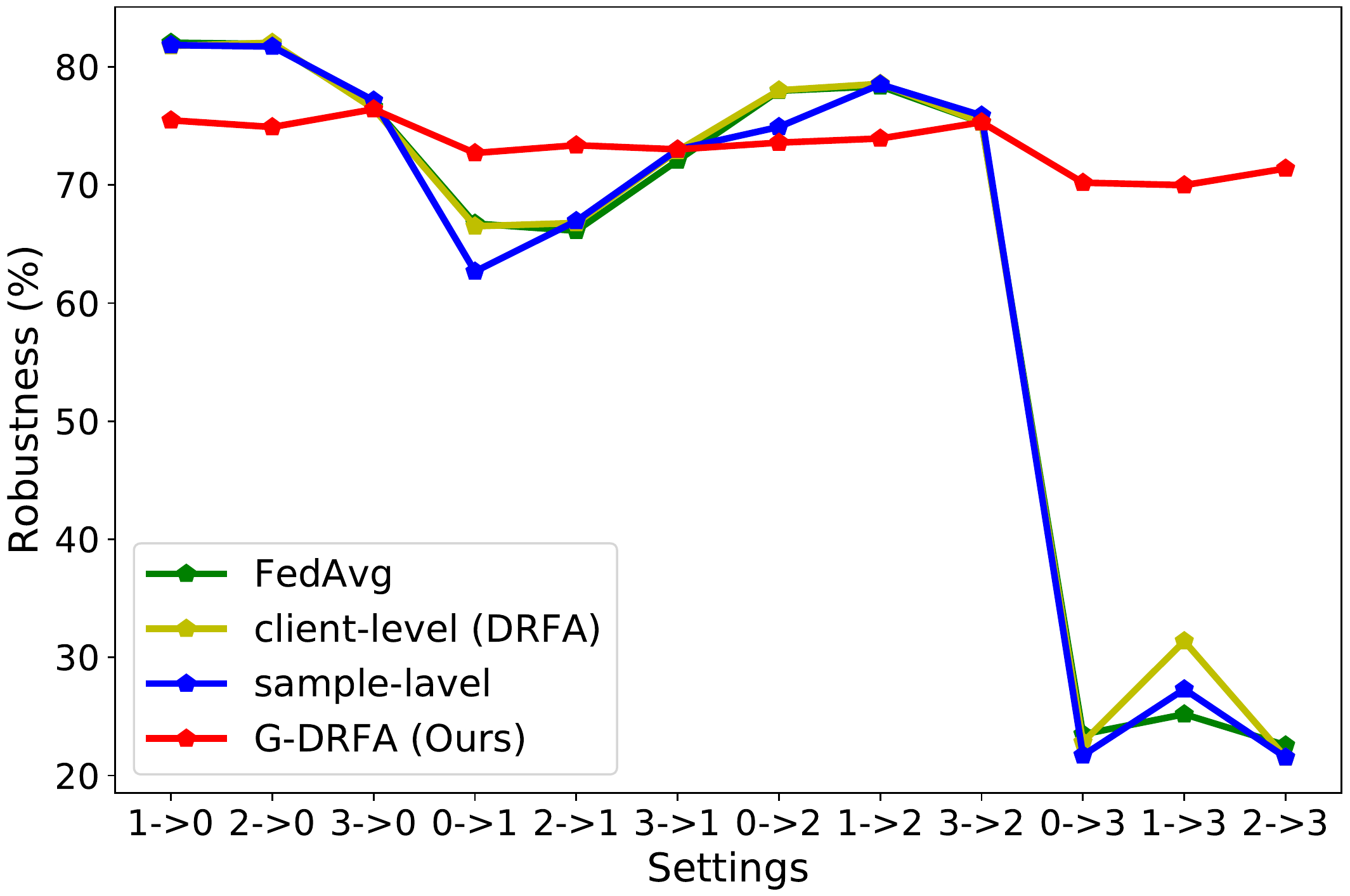}  
\includegraphics[width=2in]{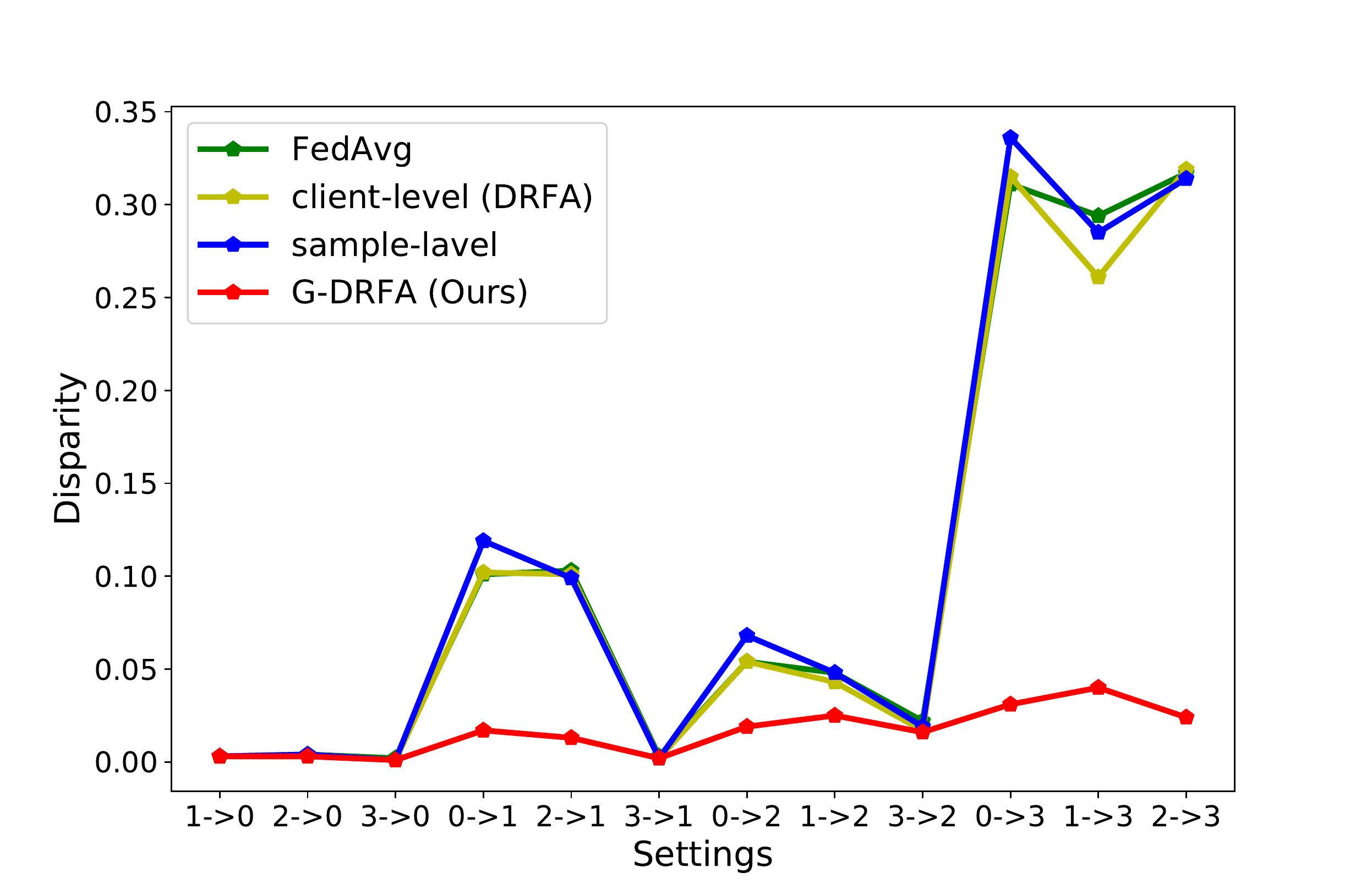}  
\label{fig-agnostic-Adult}
\end{minipage}}
\vspace{-1em}
\caption{Experimental results of robustness (the top row) and fairness (the bottom row) on agnostic distribution. IID, weakly Non-IID, strongly Non-IID and extremely Non-IID are denoted by setting 0, setting 1, setting 2 and setting 3, respectively. The coordinate $i\rightarrow j$ of horizontal axis means that we train a federated learning model in setting $i$ and test it in setting $j$.} 
\vspace{-1em}
\label{fig-agnostic}
\end{figure*}

\textbf{Evaluation Metrics.}
We evaluate models' unified group fairness on three levels: (1) attribute-level fairness, where the groups are formed by sensitive attributes; 2) client-level fairness, where the groups are formed by existing client index; and 3) agnostic distribution fairness, where the groups are formed by newly added client index.
In each kind of fairness, we use $Disparity$ in Eq.~(\ref{bias-acc}) to measure the degree of fairness, and use $Robustness$ in Eq.~(\ref{robustness-acc}) to measure the robustness. The average accuracy of the models are similar/comparable for all algorithms, and reported in appendix.

\textbf{Baselines.}
We compare the proposed FMDA-M algorithm with the following baselines:
(i) FedAvg~\cite{mcmahan2017communication}: FedAvg is a commonly used algorithm in FL, which minimizes an average risk.
(ii) DRFA~\cite{deng2021distributionally}: DRFA is a client-level federated optimization algorithm by minimizing the risk (\ref{client-dro}), which can be viewed as an improvement on the AFL~\cite{mohri2019agnostic}.
(iii) Individual-level Algorithm (denoted as IndA for convenience): We use the same algorithm as FMDA-M to solve the individual-level problem with objective (\ref{sample-dro}) as a compared baseline.
(iv) Centralized Algorithms: We also evaluate the attribute-level fairness of models trained by the traditional ERM and DRO~\cite{sagawa2019distributionally} in centralized setting, which requires a unified available training set.

\subsection{Results of Attribute-level Fairness}
We evaluate the attribute-level fairness of models trained by FMDA-M and compared baselines. The results are reported in Table \ref{table-group}. From the results, we observe that FMDA-M outperforms baselines on three datasets, in terms of both the metric $Disparity$ and $Robustness$. As we analyzed in the previous section, the client-level method is not flexible enough to deal with distribution shifts over attributes, and the individual-level method is too conservative to perform well in practice. By constructing an appropriate uncertainty set, FMDA-M achieves good performance which is very similar to centralized DRO, even in Non-IID settings.

The results on Adult dataset demonstrate that the unbalance of dataset is a great challenge for training a fair model, especially in FL where the data distribution of each client is unknown to others. We find that the overfitting appears in FedAvg, DRFA and individual-level method due to the jumbo sample size of the low-income male group. By contrast, our proposed FMDA-M samples from each subgroup according to the group weights, thus overcoming this challenge.

Note that FMDA-M also outperforms DRFA in extremely Non-IID setting, where the unified group fairness optimized in our FMDA-M is exactly the client-level fairness optimized in DRFA, as shown in Fig.~\ref{setting3}. The reason for this is that DRFA follows Eq.~(\ref{GAwithProj}) to update weights and the projection operator usually leads to the very hard weights, which may affect the stability of algorithm. By contrast, our algorithm adopts Eq.~(\ref{ExplicitSolution}) to generate smoother weights and it helps the model to converge.

\subsection{Results of Client-level Fairness}
We evaluate the client-level fairness of models trained by different algorithm and report the results in Table \ref{table-client}. We observe that FMDA-M is able to guarantee the accuracy of the worst-performing client and decrease $Disparity$ in most training distribution settings. We also find that, unlike other methods of which the performance decreases significantly with increasing degree of Non-IID, FMDA-M shows extremely stable performance under various settings, which is thanks to the weights update rule (\ref{ExplicitSolution}) we adopt.

We note that $Robustness$ of FMDA-M is slightly lower than FedAvg and DRFA in IID setting, because the distributions of clients are very similar and the $Robustness$ will degrade to average accuracy, which is in line with the optimization objective of FedAvg and DRFA. Indeed, our FMDA-M significantly improves client-level fairness in Non-IID settings (more challenging and more common in reality), though occasionally with a small performance sacrifice in IID setting.

\subsection{Results of Agnostic Distribution Fairness} \label{sec64}

To evaluate the agnostic distribution fairness, we simulate the newly added clients as follows: we train each federated model in one of the the training distribution settings (e.g., IID setting), but test under other three settings (e.g., weakly Non-IID, strongly Non-IID and extremely Non-IID settings) where the distributions of clients are different and agnostic from the existing clients.

The results of agnostic distribution fairness are shown in Figure~\ref{fig-agnostic}. We find that our FMDA-M outperforms compared baselines in terms of both $Robustness$ and $Disparity$ in most cases, which illustrates that our FMDA-M is better adapted to new distributions. As we state before, the proposed FMDA-M considers a larger uncertainty set but with appropriate degrees of freedom, so the model trained by FMDA-M can deal with kinds of new distributions. However, the resulting model can be overly pessimistic when the radius of uncertainty set is too large. Besides, the individual-level method is hard to optimize, and that is why the individual-level method does not perform well.

\begin{figure}[t]
\centering                         
\subfigure[Robustness]{
\begin{minipage}[t]{0.48\linewidth}
\includegraphics[width=1.5in]{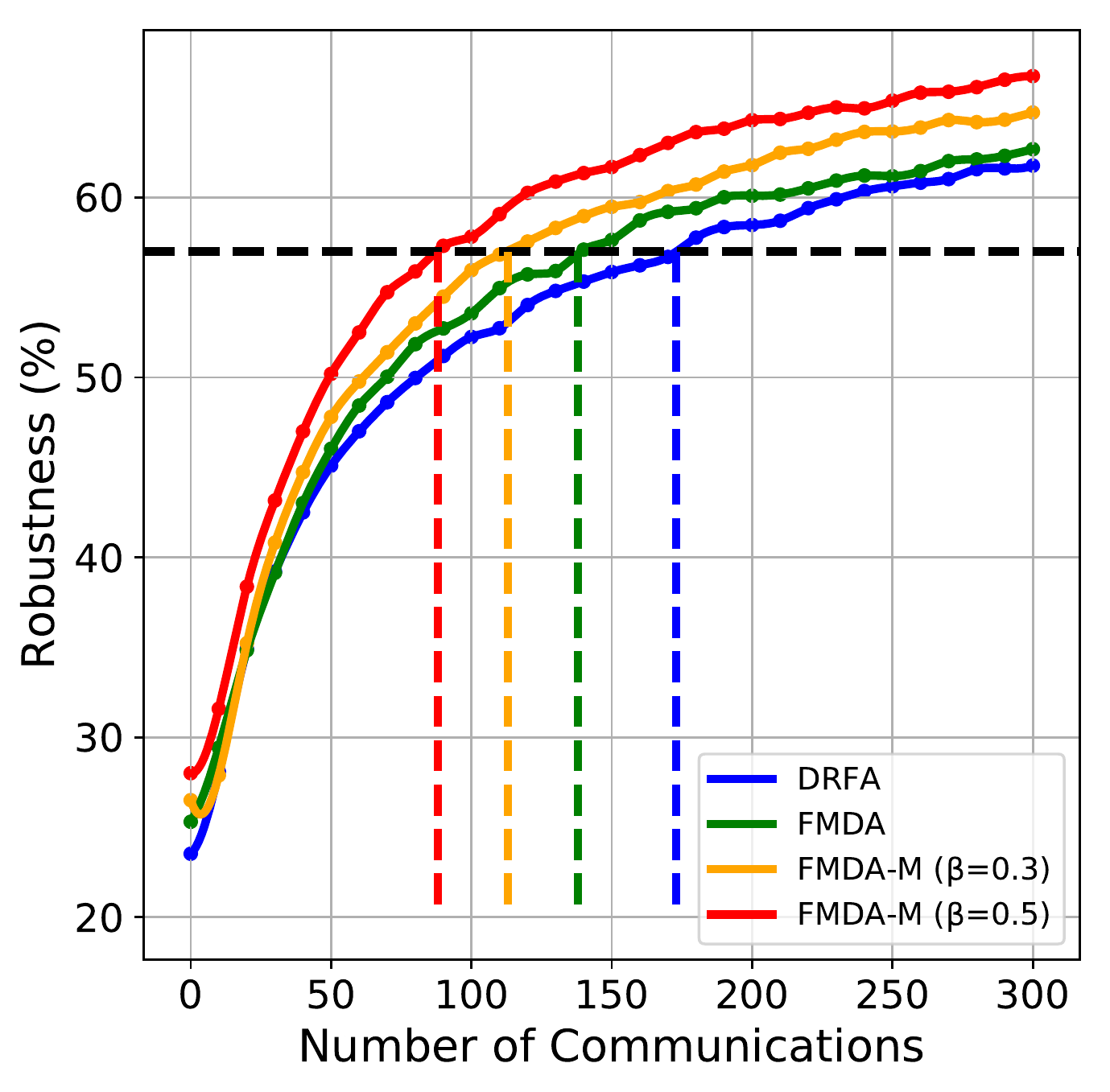}             
\end{minipage}}
\subfigure[Fairness]{
\begin{minipage}[t]{0.48\linewidth}
\includegraphics[width=1.55in]{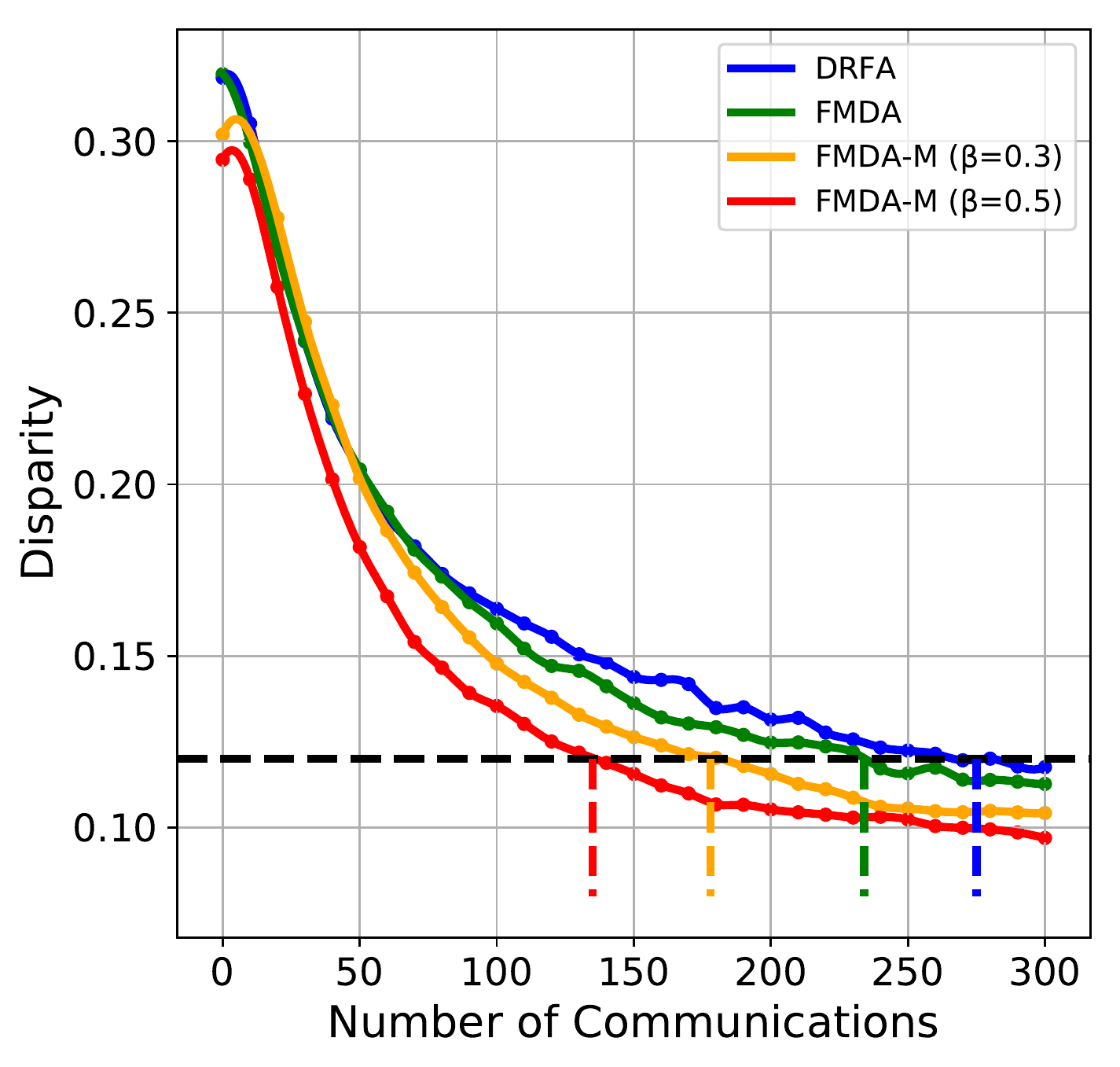}             
\end{minipage}}
\vspace{-1.2em}
\caption{Experimental results of efficiency of different algorithms on Fashion-MNIST in extremely Non-IID setting at attribute level.}
\vspace{-1.35em}
\label{fig-ablation-study}    
\end{figure}

\subsection{Efficiency and Ablation Study}
To evaluate the efficiency of our FMDA-M algorithm and demonstrate how it works, we run the following algorithms on Fashion-MNIST dataset in extremely Non-IID setting: (i) DRFA~\cite{deng2021distributionally}: DRFA algorithm is proposed to solve the min-max problem in federated setting. (ii) FMDA. (iii) FMDA-M ($\beta = \beta_{\theta}$ = $\beta_{\bm{\lambda}} = 0.3$). (iv) FMDA-M ($\beta = \beta_{\theta}$ = $\beta_{\bm{\lambda}} = 0.5$). Note that the above algorithms share the same optimization objective. We report the learning curves of models in terms of attribute-level robustness and fairness over 300 rounds of communications, as shown in Figure~\ref{fig-ablation-study}. Results in other settings are in the appendix.

The results show that the proposed FMDA outperforms DRFA in terms of convergence rate. The most likely reason is that FMDA adopt mirror ascent based on Bregman divergence, instead of projection operation based on Euclidean distance, to update the group weights, which prevents the weights from being too hard to guarantee convergence stability.
We observe that FMDA-M is more efficient and can achieve the same level as others with fewer number of communication rounds, because the momentum term can modify the direction of the current gradients to accelerate convergence (more details in appendix).

\section{Conclusion}
In this paper, we view the accuracy of federated model as the resource to be allocated. We investigate existing fairness problems in FL and propose the goal of unified group fairness, which is to achieve client-level, attribute-level and
agnostic distribution fairness simultaneously. To achieve it, we propose a fair FL framework based on a unified federated uncertainty set with theoretical analysis. We also develop an efficient FMDA-M algorithm with convergence guarantee. Empirically, extensive results show that our FMDA-M outperforms existing fair FL algorithms significantly and shows extremely stable performance in various environments.

\bibliography{aaai22}

\end{document}